\documentclass[a4paper,12pt]{elsarticle}

\usepackage{hyperref}
\usepackage{algorithm}
\usepackage{algorithmic}
\usepackage{amsmath}
\usepackage{amsthm}
\usepackage{graphicx}
\usepackage{epstopdf}
\usepackage{ulem}
\newtheorem{proposition}{Proposition}
\journal{}

\bibliographystyle{elsarticle-num}

\begin{document}
	
	\begin{frontmatter}
		
		\title{Fuzzy clustering of distribution-valued data using adaptive $L_2$ Wasserstein distances}
		
		\author{Antonio Irpino and Rosanna Verde \fnref{myfootnote}}
		\address{Second University of Naples, Dept. of Political Sciences, 81100 Caserta, Italy}
		\fntext[myfootnote]{antonio.irpino@unina2.it, rosanna.verde@unina2.it}
		\author{Francisco de A.T. De Carvalho\fnref{myfootnote2}}
		\address{Centro de Informatica, Universidade Federal de Pernambuco,\\
			Av. Jornalista Anibal Fernandes s/n - Cidade Universitaria, \\
			CEP 50740-560, Recife-PE, Brazil}
		\fntext[myfootnote]{fatc@cin.ufpe.br}

		%
		%
		
		\begin{abstract}
			Distributional (or distribution-valued) data are a new type of data arising from several sources and are considered as realizations of distributional variables. A new set of fuzzy c-means algorithms for data described by distributional variables is proposed.
			
			The algorithms use the $L2$ Wasserstein distance between distributions as dissimilarity measures. Beside the extension of the fuzzy c-means algorithm for distributional data, and considering a decomposition of the squared $L2$ Wasserstein distance, we propose a set of algorithms using different automatic way to compute the weights associated with the variables as well as with their components, globally or cluster-wise. The relevance weights are computed in the clustering process introducing product-to-one constraints.
			The relevance weights induce adaptive distances expressing the importance of each variable or of each component in the clustering process, acting also as a variable selection method in clustering. We have tested the proposed algorithms on artificial and real-world data. Results confirm that the proposed methods are able to better take into account the cluster structure of the data with respect to the standard fuzzy c-means, with non-adaptive distances.
		\end{abstract}
		
		\begin{keyword}
			Distribution-valued data\sep Wasserstein distance \sep Fuzzy clustering \sep Relevance weights
			\MSC[2010] 62H30\sep  62H86 \sep 62A86
		\end{keyword}
		
	\end{frontmatter}
	
	
	\section{Introduction}
	
	One of the current big-data age requirement is the possibility of representing groups of data by summaries allowing the minimum lose of information as possible. This is usually done by replacing the distributional data with a set of characteristic values of the distributions (e.g.: the mean, the standard deviation).
	When a set of data is observed with respect to a numerical variable, it is usual to refer at the empirical distribution or at the  estimate of the distribution that best fits the data. In this case, each object is described by a distribution-valued data and the variable is called \textit{distributional variable} (or distributional feature). Such kinds of data can also used in many practical situations, for instance, for preserving the respondents' privacy of customers of a bank or of patients of a hospital. Further, the rising of wireless sensor networks and of mobile devices, where the communication is constrained by the energy limitations of the devices, suggests that the use of suitable synthesis of sensed data is a necessary choice.
	
	Distributional variables was firstly introduced in the context of \textit{Symbolic Data Analysis} (SDA) \cite{BoDid00} as particular set-valued variables, namely, modal variables having numeric support. For example, a \textit{histogram variable} is particular type of distributional variable whose values are histograms. Thus, we call \textit{distributional variable} a more general type of attribute whose values are probability or frequency distributions on a numeric support.
	
	Among the exploratory tools of analysis, clustering is considered a typical tool for unsupervised learning. Such a method aims to aggregate a set of objects into clusters such that objects within a given cluster are similar, while objects belonging to different clusters are not. Clustering algorithms are mainly distinguished in agglomerative and partitive methods.	
	The agglomerative ones are also known as hierarchical methods. They yield a complete hierarchy, i.e., a nested sequence of partitions of the input data. On the other hand, partitive methods aim to obtain a single partition of the data into a fixed number of clusters, usually, iterating a set of steps by optimizing an objective function \cite{Jain2010,Xu05} that is generally defined accordingly to a suitable dissimilarity or distance measure between objects.
	
	Moreover, partitive methods can be divided in hard and fuzzy clustering.
	Hard clustering provides a crisp partition of a dataset, such that, each object belongs only to a cluster. A more flexible method is fuzzy clustering \cite{Bezdek81}, where a fuzzy partition of data allows an object to belong to one or more clusters according to a membership degree \cite{kalfman20015}.
	
	The present paper introduces a new method of fuzzy clustering for objects described by distributional features. In fuzzy clustering the choice of a suitable distance between objects is relevant. More recently, in the field of the analysis of distributional data, the use of Wasserstein distances \cite{Rush01} has been investigated and a new set of statistical indexes has been proposed \cite{IrpVer2015}. In particular the $L_2$ Wasserstein distance has been used for the (hard) clustering of data described by histograms, a particular type of distribution-valued description \cite{IrVer06, IRVERLEC06, VeIR08, VerIRP08did}. In \cite{IrVer06, VeIR08, VerIRP08did}, following the approach of SDA,  a generalization of the Dynamic Clustering (DC) algorithm \cite{DiSIM76} has been proposed for distribution-valued data.
	The DC, whose k-means is a particular case, is a two steps algorithm: it alternates a representation and an allocation step, such that a within homogeneity criterion is minimized. Other approaches can be referred to  \cite{Terada10}, where it is proposed a k-means clustering method using empirical joint distributions, and to \cite{VracEtAL12} where a Dynamic Clustering algorithm based on the copula analysis is proposed.
	
	A common problem in classical clustering methods is that all the variables participate with the same importance to the clustering process.
	
	Indeed, in most applications, some variables may be more discriminant in the clusters separation than others as well a cluster may be better characterized by a particular subset of variables with respect to another. For taking into consideration the different role of the variables, several strategies has been adopted.
	A first strategy consists in  weighting in advance the features according to a background knowledge. After fixing the weights for each variable, a clustering is performed and a partition is obtained.
	A second strategy consists in including in the algorithm a step in order to compute automatically weights for the variable. In order to tackle this issue, in the clustering of classical real-valued data, Ref. \cite{DiGov77} proposed to integrate adaptive distances.
	The use of adaptive distances in the clustering algorithm consists in introducing a weighting step in the optimization process. In this step  a set of weights are obtained minimizing the total sum of squares criterion. Such weights are associated with each variable (for all the clusters or for each cluster) and represents a measure of the importance of a variable in the clustering process.Such a strategy can be a way for performing a feature selection in clustering process too, see \cite{Modha2003}.

	In the framework of SDA, Refs.\cite{DECAYVES09,DECLEC09,DeCDeS10}
	proposed several adaptive distances, based on Hausdorff, City-Block and Euclidean distances in dynamic clustering algorithm of set-valued data.
	A more recent contribution \cite{IrpinoESWA}
	provides a partitioning hard clustering algorithm using an adaptive distance based on the $L_2$ Wasserstein metric. The authors propose two novel adaptive distances based on clustering schemes able to compute automatically the relevance of each distributional variable during the partitioning of the data set, under a product to one constraint of the relevance weights. In the framework of k-means with automatic relevance weights estimation a second approach was proposed for classical data in \cite{Huang_05}, where a sum to one constraint on the weights is imposed. We do not extend this method because it depends on the setting of further parameters for the clustering that would imply a longer discussion about its choice.
	
	The most clustering algorithms proposed for distribution-valued data (adaptive distance based or not) are partitioning \textit{hard} clustering methods. However, particular structures of the observed distribution-valued data could give clusters not well separated and with a high internal variability due to the presence of some data that are forced to belong to only one cluster.
	In this case, a more suitable algorithm is the \textit{fuzzy} clustering. According to that, an observation can be assigned to more than one cluster with a membership degree that expresses the similarity of this element to the representative element (prototype) of each cluster. Usually the membership degrees of an observation to the several clusters are valued in $[0,1]$ and, on all clusters, sum to 1 \cite{Bezdek81}.
	
	\subsection*{Main contributions}
	This paper extends Refs. \cite{IrVer06, IRVERLEC06, VeIR08, VerIRP08did} by proposing a fuzzy c-means clustering algorithm, in a more general scheme of Dynamical Clustering algorithm, for distributional data, based on the $L_2$ Wasserstein distance, denoted as: \textit{Fuzzy c-means with non adaptive $L_2$ Wasserstein distance (FCM-D)}. Further, using a decomposition of the $L_2$ Wasserstein distance \cite{IrpinoR07} and considering the variability measures introduced in \cite{VerIRP08did}, the distance between two distributions can be divided in two independent components: one related to the variability of averages of the histograms and the second related to the shapes of the histograms. In this paper, adaptive distances take into account the two components of the variability also, and the algorithm estimates two sets of weights for each variable and  each component. In a local approach we consider also a different set of weights for each cluster.
	
	Especially, the proposed fuzzy clustering algorithm, based on adaptive distances, is an alternating three steps procedure that estimates, step by step, the membership values of the observed distributions to the clusters, the weights for each variable and each component, as well as the cluster prototypes.
	
	Beside extending the methods above discussed, we also propose two new variants of the algorithms taking into consideration two new set of constraints.
	
	Application of simulated and real-world data will show that the new proposed settings are better able to identify the most important components of the variables for the fuzzy clustering also in presence of non discriminant variables in the cluster structure of data.
	
	\subsection*{Organization of the  paper}
	The remainder of the paper is organized as follows. Sec. \ref{SEC_data} introduces to the distributional data and the $L_2$ Wasserstein distance between distributions. Sec. \ref{SEC_method} details the proposed algorithms by defining the objective function minimized by each algorithm, the relevance weights for the variables or their components, and the derived adaptive distances for distributional data. In Sec. \ref{SEC_apply}, the proposed algorithms are tested on synthetic datasets and a real world one. Sec. \ref{SEC_conl} concludes the paper.
	
	\section{Distribution-valued data and Wasserstein distance}\label{SEC_data}
	A distributional variable takes values which are expressed by one-dimensional probability (empirical, or theoretical, parametric or non parametric) density functions. We assume that a set of $N$ objects are described by $P$ distributional variables. The vector $\mathbf{y}_k=[y_{k1},\ldots,y_{kP}]$, $k=1,\ldots,N$, is the description of the $k-th$ object for the $P$ distributional variable, where $y_{kj}$, $j=1,\ldots,P$, is a distribution-valued data.
	Whit $y_{kj}$ is associated a (estimated) density function $f_{kj}$, which has its own $F_{kj}$ cumulative distribution function and $Q_{kj}=F_{kj}^{-1}$ quantile function.
	Thus, the individuals $\times$ variables table of input data contains a (one dimensional) distribution in each cell.
	
	As told in the previous section, the choice of a suitable distance is crucial in the clustering process. Several distances between distribution functions \cite{GiSu02} can be used for comparing density or frequency distributions. However, not all the distances are appropriate in prototype-based clustering methods for distributional data, like in k-means or fuzzy c-means. In this case, the family of distances based on Wasserstein metric \cite{Rush01} permits to obtain interesting interpretative results about the characteristics of the distributions with respect to other dissimilarity measures between distributions(see \cite{VerIRP08did} for details).
	
	According to \cite{Rush01}, the $L_2$ squared Wasserstein distance between two distribution-valued data is:
	\begin{equation}\label{HOMSQ}
		d^2_W(y_{kj},y_{k'j})=\int\limits_{0}^{1} {\left[ {Q _{kj} (t) - Q_{k'j}(t)} \right] ^2 dt}.
	\end{equation}
	
	Let $\bar{y}_{kj}$ and $\bar{y}_{k'j}$ be the means (or the expected values) of, respectively, $y_{kj}$ and $y_{k'j}$, we denote with $Q^c_{kj}(t)=Q_{kj}^{c}(t)-\bar{y}_{kj}$ and $Q^c_{k'j}(t)=Q_{k'j}^{c}(t)-\bar{y}_{k'j}$ the corresponding centered (w.r.t. their respective means) quantile functions, and with $y_{kj}^c$ and $y_{k'j}^c$ the corresponding \textit{centered} distribution-valued data. In \cite{givens1984} is shown a decomposition of the $L2$ Wasserstein distance into two components, that is:
	\begin{equation} \label{eq:IrpRoma2} d^{2}_{W}(y_{kj},y_{k'j})=(\bar{y}_{kj}-\bar{y}_{k'j})^{2}+d^{2}_{W}(y_{kj}^c,y_{k'j}^c).
	\end{equation}
	In other words, the (squared) $L_2$ Wasserstein distance between two
	quantile functions $Q_{kj}(t)$ and $Q_{k'j}(t)$ is decomposed in the squared Euclidean distance between their means and the squared $L2$ Wasserstein distance between the  \textit{centered} quantile functions. The latter can be considered as a distance
	measure of the different characteristics of the distributions (variability and shape) except for their location.
	Given the input vector $\mathbf{y}_k$, since no information is available about the multivariate distribution associated to the marginal distributions $y_{kj}$, in this paper we consider the multivariate squared $L_2$ Wasserstein distance expressed as follows:
	\begin{equation}\label{dimult}
		d^2_W\left(\mathbf{y}_k,\mathbf{y}_{k'}\right)=\sum\limits_{j=1}^P d^2_W\left(y_{kj}, y_{k'j}\right).
	\end{equation}
	
	\subsection*{Adaptive distances}
	Adaptive distances \cite{DiGov77} can be considered as distances that are weighted by a suitable set of scalars for each variable, accordingly to particular constraints. In this paper, we generalize the concept of adaptive distances \cite{DiGov77} to the $L_2$ Wasserstein distance.
	Let us consider a vector of weights $\Lambda=[\lambda_1,\ldots,\lambda_p]$ such that $\lambda_j>0$. According to \cite{DECAYVES09} and \cite{DiGov77}, a general formulation for an \textit{Adaptive Single Variable (squared) Wasserstein distance} is:
	\begin{equation}\label{multiada}
		d^2_W\left(\mathbf{y}_k,\mathbf{y}_{k'}|\Lambda\right)=\sum\limits_{j=1}^P\lambda_jd^2_W\left(y_{kj}, y_{k'j}\right).
	\end{equation}
	Several approaches have been proposed in clustering (see for examples \cite{DECLEC09},\cite{DeCDeS10}), where the weights are associated to whole set of data (namely, one for each variable) or they are cluster-wisely associated (namely, a weight for each variable and each cluster). In the mentioned approaches, the weights satisfy by a product to one constraint. This is related to the minimization of the determinant of within-cluster inertia matrix, with zero components outside the main diagonal. This would lead to the maximization of the determinant of the between-inertia matrix, that is, from a geometric point of view, the size of the highest hypervolume containing the representatives of the clusters.
	
	Considering that the $L_2$ Wasserstein distance can be decomposed in two components, in hard clustering, \cite{IrpinoESWA} proposed to introduce a suitable system of weights on such components. This paper extends this approach  to fuzzy c-means of distributional data. Moreover, existing methods assign weights independently to the two components, making not comparable the components each other. The present paper introduces two new configurations of weights, accordingly to two new constraints, that solve such drawbacks.
	
	\section{Fuzzy c-means with adaptive Wasserstein distances}
	\label{SEC_method}
	
	This section is concerned with fuzzy c-means algorithms that aim to cluster histogram-valued data based on adaptive $L_2$ Wasserstein distance.
	In the remainder of the paper we will use the  following notation: objects are indexed by $k=1,\ldots,N$, variables are denoted by $Y_j$ where $j=1,\ldots,P$, the distributional data observed on the $Y_j$ variable for the $k$-th object is denoted with $y_{kj}$, clusters are indexed by $i=1,\ldots,c$, and the memberships degree of the $k$-th object to the $i$-th cluster is denoted with $u_{ik}$.
	
	Fuzzy c-means is a prototype-based clustering method, namely, clusters are represented by prototypes and the membership degree of an object to a cluster depends on its distance from the cluster prototype with respect to the distances from the other ones.
	The definition of the prototypes and of the memberships depends on the minimization of a suitable within-cluster dispersion criterion that is based on the distance chosen for comparing objects and prototypes. We remark that prototypes are artifacts or fictitious objects described by a set of distributional data.
	
	In the following, we present the criterion used in the algorithms for the standard (namely, without using adaptive distances) fuzzy c-means algorithm of distributional data using the $L_2$ Wasserstein distance.
	Then, we present the criteria used for different configurations of the relevance weights for the adaptive-distance-based fuzzy c-means.
	Each fuzzy cluster {$i \, (i=1,\ldots,c)$ has a representative or prototype $\mathbf{g}_i = (g_{i1}, \ldots, g_{iP})$, where each $g_{ij}$ is a distributional data, having a distribution function $G_{ij}$ and a quantile function $Q_{ij}$.

		\subsection{Criterion function of standard fuzzy c-means (FCM)}
		
		Fixing a $m>1$ scalar (the fuzzyfier parameter) the standard fuzzy c-means algorithm aims to provide a fuzzy partition
		of a set of $N$ objects into $K$ fuzzy clusters, represented by a positive matrix of membership degrees $\textbf{U} =(\textbf{u}_1,\ldots,\textbf{u}_N)$ with $\textbf{u}_k = (u_{k1},\ldots,u_{kc}) \, (k=1,\ldots,N)$, such that $\sum\limits_{i=1}^c u_{ki}=1$, and a matrix of prototypes $\mathbf{G}=(\mathbf{g}_1,\ldots,\mathbf{g}_c)$  of the fuzzy clusters.
		
		They are obtained iteratively by (locally) minimizing a suitable objective function, here-below denoted with $J$, that gives the total homogeneity of the fuzzy partition computed as the sum of the homogeneity in each fuzzy cluster:
		
		\begin{equation}\label{crit-1}
			J(\mathbf{G}, \mathbf{U}) = \sum_{i=1}^c \sum_{k=1}^N (u_{ik})^m \; d^2_W(\mathbf{y}_k,\mathbf{g}_c).
		\end{equation}
		
		The $d^2_W$ function is the non-adaptive (squared) $L_2$ Wasserstein distance computed between the $k$-th object and the prototype $\mathbf{g}_i$ of the fuzzy cluster $i$, which is defined as
		\begin{equation}
			\label{dist-1}
			d^2_W(\mathbf{y}_k,\mathbf{g}_i) = \sum\limits^{P}_{j=1}({\bar y}_{kj}-{\bar y}_{g_{ij}})^{2}+ \sum\limits^{P}_{j=1}d^2_W(y^c_{kj},g^c_{ij}).
		\end{equation}
		
		For the rest of the paper, we denote with $dM_{ik,j}=\sum\limits^{P}_{j=1}({\bar y}_{kj}-{\bar y}_{g_{ij}})^{2}$ the squared distance between the means of distributional data $y_{kj}$ and $y_{g_{ij}}$, and with $dV_{ik,j}=\sum\limits^{P}_{j=1}d^2_W(y^c_{kj},g^c_{ij})$ the squared $L2$ Wasserstein distance between the centered (w.r.t. the respective means) distributional data. Equation \ref{dist-1} can be written in a compact formulation as follows:
		\begin{equation}
			\label{dist-1C}
			d_W(\mathbf{y}_k,\mathbf{g}_i) = dM_{ik,j}+dV_{ik,j}.
		\end{equation}
		
		\subsection{Criterion function of adaptive distance-based fuzzy c-means (AFCM)}
		Usually, clustering methods do not take into account the relevance of the variables, i.e., considering all the variables having the same importance in the clustering process. However, in most applications some variables may be irrelevant and, among the relevant ones, some may be more or less relevant than others. Furthermore, the relevance of each variable to each cluster may be different, i.e., each cluster may have a different set of relevant variables \cite{Huang_05,diday77,friedman04,frigui04}.
		
		The identification of a relevance weight for each variable in the clustering process is useful also as a feature selection method \cite{Lacl_2015}, for ranking variables, or, from a geometric point of view, for a better interpretation of how each variable contributes to the clusters definition. If relevance weights are cluster-wisely computed, they provide an evidence of the role of each variable in determining the shape of each cluster (assuming that a different variability structure is observed for each cluster).
		
		Recalling that $L2$ Wasserstein distance consists in two components, the first related to the position and the second related to the internal variability structure of the distributions (namely, scale and shape), we propose to consider the relevance of each component of the distributional variable too.
		
		In this case, the method we propose is a fuzzy clustering algorithm, which aims to provide a fuzzy partition of $N$ objects around $c$ prototypes, represented by a matrix of memberships $\textbf{u}_k = (u_{k1},\ldots,u_{kc}) \, (k=1,\ldots,N)$ and a matrix of positive relevance weights denoted by $\boldsymbol{\Lambda}$. The dimensions of the $\boldsymbol{\Lambda}$ matrix depends on the possibility of computing a set of $P$ relevance weights, if a weight is associated with each variable for the whole dataset, $2P$ weights, if a weight is associated with each component of the variable for the whole dataset, $P\times c$ weights, if a weight is computed for each variable and each cluster, and, finally, $2P\times c$, if a weight is computed for each component and each cluster.
		
		As the prototypes and the membership degrees, the relevance weights are not defined in advance, but are obtained by (locally) minimizing a suitable objective function, here-after denoted as $J_A$, the criterion function that gives the total homogeneity of the fuzzy partition computed as the sum of the homogeneity in each fuzzy cluster.
		
		The minimum of the objective function  $J_A$ is obtained when $\mathbf\Lambda$ is a null matrix. For avoiding such a trivial result, a constraint on the elements of $\mathbf\Lambda$ is needed. In the literature, two main types of constraints are proposed: a product-to-one constraint\cite{DiGov77} and a sum-to-one constraint\cite{Huang_05}. Because the latter method require also the tuning of a further parameter, needing a deeper discussion, for the sake of brevity we do not treat it here.
		
		Following the considerations about the relevance weights, we have four choi\-ces for the $J_A$ criterion, depending on the choice between global and clu\-ster-wise relevance weights and whether we assign a single weight for each variable or two weights for the components of the variable.
		The four criterion functions are summarized as follows:
		
		\begin{description}
			\item[Global for each variable: GV]
			\begin{equation}\label{crit-1-1}
				J_A(\mathbf{G}, \mathbf{\Lambda}, \mathbf{U}) = \sum_{i=1}^c \sum_{k=1}^N (u_{ik})^m d(\mathbf{y}_k,\mathbf{g}_i|\mathbf{\Lambda})
			\end{equation}
			\noindent with $\mathbf{\Lambda} = [\lambda_j]_{1 \times P}$, $dM_{ik,j}|\mathbf{\Lambda} = \lambda_j dM_{ik,j}$, $dV_{ik,j}|\mathbf{\Lambda} = \lambda_j dV_{ik,j}$ and
			\begin{equation}\label{dist-1-1}
				d(\mathbf{y}_k,\mathbf{g}_i|\mathbf{\Lambda})
				= \sum_{j=1}^P \left[dM_{ik,j}|\mathbf{\Lambda} + dV_{ik,j}|\mathbf{\Lambda}\right]
			\end{equation}
			
			\item[Cluster-wise for each variable: CwV]
			\begin{equation}\label{crit-1-2}
				J_A(\mathbf{G}, \mathbf{\Lambda}, \mathbf{U}) = \sum_{i=1}^c \sum_{k=1}^N  (u_{ik})^m d(\mathbf{y}_k,\mathbf{g}_i|\mathbf{\Lambda})
			\end{equation}
			\noindent with $\mathbf{\Lambda} = [\lambda_{ij}]_{c \times P}$, $dM_{ik,j}|\mathbf{\Lambda} = \lambda_{ij} dM_{ik,j}$, $dV_{ik,j}|\mathbf{\Lambda} = \lambda_{ij} dV_{ik,j}$ and
			\begin{equation}\label{dist-1-2}
				d(\mathbf{y}_k,\mathbf{g}_i|\mathbf{\Lambda})
				= \sum_{j=1}^P \left[dM_{ik,j}|\mathbf{\Lambda} + dV_{ik,j}|\mathbf{\Lambda}\right]
			\end{equation}
			
			\item[Global for each component: GC]
			\begin{equation}\label{crit-1-3}
				J_A(\mathbf{G}, \mathbf{\Lambda}, \mathbf{U}) = \sum_{i=1}^c \sum_{k=1}^N (u_{ik})^m d(\mathbf{y}_k,\mathbf{g}_i|\mathbf{\Lambda})
			\end{equation}
			\noindent with $\mathbf{\Lambda} = [\lambda_{j,M}, \lambda_{j,V}]_{2 \times P}$, $dM_{ik,j}|\mathbf{\Lambda} = \lambda_{j,M} dM_{ik,j}$, $dV_{ik,j}|\mathbf{\Lambda} = \lambda_{j,V} dV_{ik,j}$ and
			\begin{equation}\label{dist-1-3}
				d(\mathbf{y}_k,\mathbf{g}_i|\mathbf{\Lambda})
				= \sum_{j=1}^P \left[dM_{ik,j}|\mathbf{\Lambda} + dV_{ik,j}|\mathbf{\Lambda}\right]
			\end{equation}
			
			\item[Cluster-wise for each component: CwC]
			\begin{equation}\label{crit-1-4}
				J_A(\mathbf{G}, \mathbf{\Lambda}, \mathbf{U}) = \sum_{i=1}^c \sum_{k=1}^N (u_{ik})^m d(\mathbf{y}_k,\mathbf{g}_i|\mathbf{\Lambda})
			\end{equation}
			\noindent with \\
			$\mathbf{\Lambda} = [\lambda_{ij,M},  \lambda_{ij,V}]_{2c \times P}$, $dM_{ik,j}|\mathbf{\Lambda} = \lambda_{ij,M} dM_{ik,j}$, $dV_{ik,j}|\mathbf{\Lambda} = \lambda_{ij,V} dV_{ik,j}$ and
			
			\begin{equation}\label{dist-1-4}
				d(\mathbf{y}_k,\mathbf{g}_i|\mathbf{\Lambda})
				= \sum_{j=1}^P \left[dM_{ik,j}|\mathbf{\Lambda} + dV_{ik,j}|\mathbf{\Lambda}\right]
			\end{equation}
		\end{description}
		where $u_{ik}$ and $m$ are defined as usual in fuzzy c-means.
		
		These functions measures the within variability of clusters depending on the relevance weights, i.e., the homogeneity of clusters.
		As usual for the fuzzy c-means, the algorithm looks for a (local) minimum of $J_A$, bearing in mind that the local minimum depends on the initialization of the algorithm.
		
		In the following, the criteria proposed in the literature, for crisp or fuzzy clustering, and two new criteria are presented for the product-to-one constraint on the four functions ins Eqs. (\ref{crit-1-1}, \ref{crit-1-2}, \ref{crit-1-3}, and \ref{crit-1-4}).
		\begin{description}
			\item[Constraint for Eq. (\ref{crit-1-1})] See \cite{diday77}:
			\begin{equation}\label{Con-P-1}
				\prod_{j=1}^p \lambda_{j} = 1, \, \lambda_{j} > 0
			\end{equation}
			
			\item[A set of $c$ constraints for Eq. (\ref{crit-1-2})] See \cite{diday77}:
			\begin{equation}\label{Con-P-2}
				\prod_{j=1}^p \lambda_{ij} = 1, \, \lambda_{ij} > 0
			\end{equation}
			\item[A set of $2$ constraints for Eq. (\ref{crit-1-3})]See \cite{IrpinoESWA}:
			\begin{equation}\label{Con-P-3}
				\prod_{j=1}^p \lambda_{j,M} = 1, \, \lambda_{j,M} > 0
				\;\; and \;\;
				\prod_{j=1}^p \lambda_{j,V} = 1, \, \lambda_{j,V} > 0
			\end{equation}
			\item[A set of $2\times c$ constraints for Eq. (\ref{crit-1-4})]See \cite{IrpinoESWA}:
			\begin{equation}\label{Con-P-4}
				\prod_{j=1}^p \lambda_{ij,M} = 1, \, \lambda_{ij,M} > 0
				\;\; and \;\;
				\prod_{j=1}^p \lambda_{ij,V} = 1, \, \lambda_{ij,V} > 0
			\end{equation}
			\item[A new set of of $2$ constraints for Eq. (\ref{crit-1-3})]:
			\begin{equation}\label{New-P-glo}
				\prod_{j=1}^p \lambda_{j,M} \lambda_{j,V}= 1, \, \lambda_{j,M} > 0,\,\lambda_{j,V} > 0
			\end{equation}
			
			\item[A new set of $2\times c$ constraints for Eq. (\ref{crit-1-4})]:
			\begin{equation}\label{New-P-CW}
				\prod_{j=1}^p \lambda_{ij,M} \lambda_{ij,V}= 1, \, \lambda_{ij,M} > 0,\,\lambda_{ij,V} > 0
			\end{equation}
		\end{description}
		\paragraph{Remark} The new proposed constraints are useful for comparing all the $\lambda$ values both for the mean component and dispersion one,  while those proposed in \cite{IrpinoESWA}, allow comparisons separately for the two components. In other words, because of the decomposition of the distance, the sets of relevance weights are independently identified w.r.t. only the mean and the dispersion components of the distributional variables.

		\subsection{The optimization algorithm}
		This section provides the optimization algorithm aiming to compute the prototypes, the relevance weights (for the AFCM algorithm) and the fuzzy partition.
		
		From an initial solution, for the FCM algorithm, the minimization of $J$ is performed in two steps (computation of the prototypes and computation of the membership degrees), whereas for the AFCM algorithm, the minimization of $J_A$ is performed in three steps (computation of the prototypes, computation of the relevance weights, and computation of the membership degrees).
		\subsubsection{Initialization}
		The results of the algorithms of c-means type, is sensitive to the initialization of membership degrees, or of initial centers. In this case, also an initialization strategy for choosing the initial values of the relevance weights for the variables (or the components) is needed. Among several approaches suggested in the literature \cite{Fuzzy_init}, we adopt the following initialization strategy.
		The initialization step requires the definition of an initial of $\mathbf{U}^{(0)}$ matrix of membership degrees, of the $\mathbf{\Lambda}^{(0)}$ matrix of relevance weights, and then the definition of an initial set $\mathbf{G}^{(0)}$ of prototypes. Being an iterative algorithm,  in the  superscript brackets we denoted the iteration of the algorithm, being $(0)$ the initialization step. The $\mathbf{U}^{(0)}$ matrix is initialized by assigning, for each column a random vector of $c$ positive scalars summing up to one. The $\mathbf{\Lambda}^{(0)}$ matrix is initialized consistently with the constraints, giving an initial equal relevance to all the variables (or the components). Namely, we use a unitary matrix if the constraint is the product to one.
		
		\subsubsection{Computation of the prototypes}
		This section provides an algebraic solution for the optimal computation of the representative (prototype vector) of a fuzzy cluster.
		
		For both algorithms FCM and AFCM, the first step consists in the computation of the $\mathbf{G}$ matrix of the initial prototypes. With $\mathbf{U}$ fixed for the FCM algorithm and with $\boldsymbol{\Lambda}$, $\mathbf{U}$ fixed for the AFCM algorithm, by taking, respectively, the derivative of $J$ and $J_A$ with respect to the prototypes, the  distributional description of the generic $g_{ij}$ ($i=1,\ldots,c$, $j=1,\ldots,P$) element of the matrix of prototypes $\mathbf{G}$ is computed from the following optimization problem:
		\begin{equation}
			\label{MINI_PROTO}
			\sum_{k=1}^N  (u_{ik})^m dM_{ik,j} + \sum_{k=1}^N  (u_{ik})^m dV_{ik,j}, \longrightarrow  \mbox{ Min }.
		\end{equation}
		Recalling that $dM_{ik,j}=\sum\limits^{P}_{j=1}({\bar y}_{kj}-{\bar g}_{{ij}})^{2}$ and $dV_{ik,j}=\sum\limits^{P}_{j=1}d^2_W(y^C_{kj},g^C_{ij})$, and that
		$$
		d^2_W(y^C_{kj},g^C_{ij})=\int\limits_0^1\left[Q^C_{kj}(t)-Q^C_{ij}(t)\right]^2dt;
		$$
		where $Q^C_{kj}=Q_{kj}-{\bar y}_{kj}$ and $Q^C_{ij}=Q_{ij}-{\bar g}_{ij}$ are  centered quantile functions, i.e. quantile function minus the respective means, the problem in Eq. (\ref{MINI_PROTO}) is solved by setting the partial derivatives, w.r.t. ${\bar g}_{ij}$ and $Q^C_{ij}$ to zero. According to \cite{IrpVer2015}, the quantile function associated with the $g_{kj}$ \textit{pdf} is obtained as follows:
		\begin{equation}\label{prot-1}
			Q_{ij}=Q^{C}_{ij}+{\bar g}_{{ij}}=
			\frac
			{
				\sum_{k=1}^N (u_{ik})^m Q^{C}_{kj}
			}
			{
				\sum_{k=1}^N (u_{ik})^m
			}
			+
			\frac
			{
				\sum_{k=1}^N (u_{ik})^m {\bar y}_{kj}
			}
			{
				\sum_{k=N}^n (u_{ik})^m
			},
		\end{equation}
		
		\subsubsection{Computation of the relevance weights}
		This section provides an optimal solution for the computation of the relevance weights, globally for all fuzzy clusters or cluster-wise for each fuzzy cluster, during the weighting step of AFCM algorithm.
		
		For the AFCM algorithm, with $\mathbf{G}$ and $\mathbf{U}$ fixed, this step aims at computing the elements of the matrix $\boldsymbol{\Lambda}$ of relevance weights.

		\begin{proposition}\label{prop-weight-1}
			The vectors of relevance weights are computed according to the adaptive $L_2$ Wasserstein distance:
			\begin{description}
					
				\item[$\mathbf{\Pi}$-a)] If $$J_A(\mathbf{G}, \mathbf{\Lambda}, \mathbf{U}) = \sum_{i=1}^c \sum_{k=1}^N\sum_{j=1}^P  (u_{ik})^m \,\lambda_jd_W(y_{kj},g_{ij})$$ is constrained by $\prod_{j=1}^P \lambda_{j} = 1, \, \lambda_{j} > 0$  the relevance weights are $P$ and are computed as follows:
				\begin{equation}
					\label{W-Glo-1V-Prod}
					\lambda_j^{(t)}=\frac{{{{\left[ {\prod\limits_{h = 1}^p {\sum\limits_{i = 1}^c {\sum\limits_{k = 1}^N {{{\left(u_{ik}^{(t - 1)}\right)}^m}{d_W}\left( {{y_{kh}},{g_{ih}^{(t)}}} \right)} } } } \right]}^{\frac{1}{p}}}}}{{\sum\limits_{i = 1}^c\sum\limits_{k = 1}^N {{{\left(u_{ik}^{(t - 1)}\right)}^m}{d_W}\left( {{y_{kj}},{g_{ij}^{(t)}}} \right)} }}
				\end{equation}
				\item[$\mathbf{\Pi}$-b)] If $$J_A(\mathbf{G}, \mathbf{\Lambda}, \mathbf{U}) = \sum_{i=1}^c \sum_{k=1}^N\sum_{j=1}^P  (u_{ik})^m \,\left[\lambda_{j,M}dM_{ik,j}+\lambda_{j,V}dV_{ik,j})\right]$$
				is subject to $2$ constraints equal to $\prod_{j=1}^p \lambda_{j,M} = 1$, $\lambda_{j,M} > 0$
				and \\
				$\prod_{j=1}^p \lambda_{j,V} = 1$, $\lambda_{j,V} > 0$
				the relevance weights are $2 \times P$ and are computed as follows:
				
				\begin{eqnarray}
					\label{W-Glo-2C-Prod}
					\lambda_{j,M}^{(t)}=\frac{{{{\left[ {\prod\limits_{h = 1}^p {\sum\limits_{i = 1}^c {\sum\limits_{k = 1}^N {{{\left(u_{ik}^{(t - 1)}\right)}^m}d{M_{ik,h}}^{(t)}} } } } \right]}^{\frac{1}{p}}}}}{{\sum\limits_{i = 1}^c\sum\limits_{k = 1}^N {{{\left(u_{ik}^{(t - 1)}\right)}^m}d{M_{ik,j}^{(t)}}} }},\;and\;\nonumber\\
					\lambda_{j,V}^{(t)}=\frac{{{{\left[ {\prod\limits_{h = 1}^p {\sum\limits_{i = 1}^c {\sum\limits_{k = 1}^N {{{\left(u_{ik}^{(t - 1)}\right)}^m}d{V^{(t)}_{ik,h}}} } } } \right]}^{\frac{1}{p}}}}}{{\sum\limits_{i = 1}^c\sum\limits_{k = 1}^N {{{\left(u_{ik}^{(t - 1)}\right)}^m}d{V^{(t)}_{ik,j}}} }}.
				\end{eqnarray}
				\item[$\mathbf{\Pi}$-c)] If $$J_A(\mathbf{G}, \mathbf{\Lambda}, \mathbf{U}) = \sum_{i=1}^c \sum_{k=1}^N\sum_{j=1}^P  (u_{ik})^m \,\lambda_{ij}d_W(y_{kj},g_{ij})
				$$ is subject to $c$ constraints equal to $\prod_{j=1}^p \lambda_{ij} = 1, \, \lambda_{ij} > 0$ the relevance weights are $c\times P$ and are computed as follows:
				\begin{equation}
					\label{W-Loc-1V-Prod}
					\lambda_{ij}^{(t)}=\frac{{{{\left[ {\prod\limits_{h = 1}^p {\sum\limits_{k = 1}^N {{{\left(u_{ik}^{(t - 1)}\right)}^m}{d_W}\left( {{y_{kh}},{g_{ih}^{(t)}}} \right)} } } \right]}^{\frac{1}{p}}}}}{{\sum\limits_{k = 1}^N {{{\left(u_{ik}^{(t - 1)}\right)}^m}{d_W}\left( {{y_{kj}},{g_{ij}^{(t)}}} \right)} }}
				\end{equation}
				\item[$\mathbf{\Pi}$-d)]If $$
				J_A(\mathbf{G}, \mathbf{\Lambda}, \mathbf{U}) = \sum_{i=1}^c \sum_{k=1}^N\sum_{j=1}^P  (u_{ik})^m \,\left[\lambda_{ij,M}dM_{ik,j}+\lambda_{ij,V}dV_{ik,j})\right]
				$$
				is subject to $2 \times c \times P$ constraints equal to $\prod_{j=1}^p \lambda_{ij,M} = 1$, $\lambda_{ij,M} > 0$ and $\prod_{j=1}^p \lambda_{ij,V} = 1$, $\lambda_{ij,V} > 0$,
				the $2 \times c \times P$ relevance weights  are computed as follows:
				
				\begin{eqnarray}
					\label{W-Loc-2C-Prod}
					\lambda^{(t)}_{ij,M}=\frac{{{{\left[ {\prod\limits_{h = 1}^p {\sum\limits_{k = 1}^N {{{\left(u_{ik}^{(t - 1)}\right)}^m}d{M^{(t)}_{ik,h}}} } } \right]}^{\frac{1}{p}}}}}{{\sum\limits_{k = 1}^N {{{\left(u_{ik}^{(t - 1)}\right)}^m}d{M^{(t)}_{ik,j}}} }}\,and\,\nonumber\\
					\lambda^{(t)}_{ij,V}=\frac{{{{\left[ {\prod\limits_{h = 1}^p {\sum\limits_{k = 1}^N {{{\left(u_{ik}^{(t - 1)}\right)}^m}d{V^{(t)}_{ik,h}}} } } \right]}^{\frac{1}{p}}}}}{{\sum\limits_{k = 1}^N {{{\left(u_{ik}^{(t - 1)}\right)}^m}d{V^{(t)}_{ik,j}}} }}.
				\end{eqnarray}
				\item[$\mathbf{\Pi}$-e)]
				If $$J_A(\mathbf{G}, \mathbf{\Lambda}, \mathbf{U}) = \sum_{i=1}^c \sum_{k=1}^N\sum_{j=1}^P  (u_{ik})^m \,\left[\lambda_{j,M}dM_{ik,j}+\lambda_{j,V}dV_{ik,j})\right]
				$$
				is subject to the constraint equal to $\prod_{j=1}^p \lambda_{j,M}\lambda_{j,V} = 1$, $\lambda_{j,M} > 0$, $\lambda_{j,V} > 0$,
				the relevance weights are $2 \times P$ and are computed as follows:
				
				\begin{align}
					\label{W-Loc-1V-Prod_new}
					\lambda _{j,M}^{(t)} = &\frac{{{{\left\{ {\prod\limits_{h = 1}^P {\left[ {\sum\limits_{i = 1}^c {\sum\limits_{k = 1}^N {{{\left( {{u^{(t-1)}_{ik}}} \right)}^m}d{M^{(t)}_{ik,h}}} } } \right]\left[ {\sum\limits_{i = 1}^c {\sum\limits_{k = 1}^N {{{\left( {{u^{(t-1)}_{ik}}} \right)}^m}{dV^{(t)}_{ik,h}}} } } \right]} } \right\}}^{\frac{1}{{2P}}}}}}{{\sum\limits_{i = 1}^c {\sum\limits_{k = 1}^N {{{\left( {{u^{(t-1)}_{ik}}} \right)}^m}{dM^{(t)}_{ik,j}}} } }},\;and\;\nonumber\\
					\lambda _{j,V}^{(t)} = &\frac{{{{\left\{ {\prod\limits_{h = 1}^P {\left[ {\sum\limits_{i = 1}^c {\sum\limits_{k = 1}^N {{{\left( {{u^{(t-1)}_{ik}}} \right)}^m}{dM^{(t)}_{ik,h}}} } } \right]\left[ {\sum\limits_{i = 1}^c {\sum\limits_{k = 1}^N {{{\left( {{u^{(t-1)}_{ik}}} \right)}^m}{dV^{(t)}_{ik,h}}} } } \right]} } \right\}}^{\frac{1}{{2P}}}}}}{{\sum\limits_{i = 1}^c {\sum\limits_{k = 1}^N {{{\left( {{u^{(t-1)}_{ik}}} \right)}^m}{dV^{(t)}_{ik,j}}} } }}.
				\end{align}
				\item[$\mathbf{\Pi}$-f)]
				If $$
				J_A(\mathbf{G}, \mathbf{\Lambda}, \mathbf{U}) = \sum_{i=1}^c \sum_{k=1}^N\sum_{j=1}^P  (u_{ik})^m \,\left[\lambda_{ij,M}dM_{ik,j}+\lambda_{ij,V}dV_{ik,j})\right]
				$$
				is subject to $2 \times c \times P$ constraints equal to $\prod_{j=1}^p \lambda_{ij,M}\lambda_{ij,V} = 1$, $\lambda_{ij,M} > 0$ and  $\lambda_{ij,V} > 0$,
				the $2 \times c \times P$ relevance weights  are computed as follows:
				
				\begin{align}
					\label{W-Glo-2C-Prod_new}
					\lambda _{ij,M}^{(t)} =& \frac{{{{\left\{ {\prod\limits_{h = 1}^P {\left[ {\sum\limits_{k = 1}^N {{{\left( {{u^{(t-1)}_{ik}}} \right)}^m}{dM^{(t)}_{ik,h}}} } \right]\left[ {\sum\limits_{k = 1}^N {{{\left( {{u^{(t-1)}_{ik}}} \right)}^m}{dV^{(t)}_{ik,h}}} } \right]} } \right\}}^{\frac{1}{{2P}}}}}}{{\sum\limits_{k = 1}^N {{{\left( {{u^{(t-1)}_{ik}}} \right)}^m}{dM^{(t)}_{ik,j}}} }},\;and\;\nonumber\\
					\lambda _{ij,V}^{(t)} =& \frac{{{{\left\{ {\prod\limits_{h = 1}^P {\left[ {\sum\limits_{k = 1}^N {{{\left( {{u^{(t-1)}_{ik}}} \right)}^m}{dM^{(t)}_{ik,h}}} } \right]\left[ {\sum\limits_{k = 1}^N {{{\left( {{u^{(t-1)}_{ik}}} \right)}^m}{dV^{(t)}_{ik,h}}} } \right]} } \right\}}^{\frac{1}{{2P}}}}}}{{\sum\limits_{k = 1}^N {{{\left( {{u^{(t-1)}_{ik}}} \right)}^m}{dV^{(t)}_{ik,j}}} }}.
				\end{align}
				
			\end{description}
		\end{proposition}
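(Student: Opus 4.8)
The plan is to treat each item as a constrained minimization of $J_A$ in the relevance weights alone, with $\mathbf{G}$ and $\mathbf{U}$ held fixed, and to solve it by the method of Lagrange multipliers. The essential observation is that, for fixed prototypes and memberships, each $J_A$ is \emph{linear} in the weights: collecting the terms that multiply a given weight produces a nonnegative aggregated coefficient. For case $\mathbf{\Pi}$-a, writing $D_j = \sum_{i=1}^c \sum_{k=1}^N (u_{ik})^m d_W(y_{kj}, g_{ij})$, the criterion becomes $J_A = \sum_{j=1}^P \lambda_j D_j$, and every other case is built from analogous coefficients (split into $dM$ and $dV$ aggregates for the component-wise items, and with the outer sum over $i$ dropped for the cluster-wise items).

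First I would carry out case $\mathbf{\Pi}$-a in full. Forming $L = \sum_{j} \lambda_j D_j - \mu\bigl(\prod_{j} \lambda_j - 1\bigr)$ and setting $\partial L/\partial \lambda_\ell = 0$ gives $D_\ell = \mu \prod_{j\neq\ell}\lambda_j = \mu/\lambda_\ell$, where the last equality uses the constraint $\prod_j \lambda_j = 1$. Hence $\lambda_\ell D_\ell = \mu$ is constant in $\ell$; reimposing the constraint yields $\mu = (\prod_h D_h)^{1/P}$, and substituting back gives $\lambda_\ell = (\prod_h D_h)^{1/P}/D_\ell$, which is exactly (\ref{W-Glo-1V-Prod}). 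The separate-constraint case $\mathbf{\Pi}$-b and the cluster-wise cases $\mathbf{\Pi}$-c, $\mathbf{\Pi}$-d then reduce immediately: in $\mathbf{\Pi}$-b the objective splits into a pure-$M$ sum and a pure-$V$ sum, each carrying its own independent product constraint, so the single-constraint computation applies verbatim to each; in $\mathbf{\Pi}$-c and $\mathbf{\Pi}$-d the weight $\lambda_{ij}$ appears only in the cluster-$i$ block, so the $c$ (resp. $2c$) constraints decouple and the same formula is applied block by block with the outer $i$-summation removed.

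The genuinely new part, and the step I expect to be the main obstacle, is the coupled constraint in $\mathbf{\Pi}$-e and $\mathbf{\Pi}$-f, where a single product $\prod_j \lambda_{j,M}\lambda_{j,V} = 1$ ties the mean and variability weights together. Writing $\mathcal{M}_j = \sum_{i,k}(u_{ik})^m dM_{ik,j}$ and $\mathcal{V}_j = \sum_{i,k}(u_{ik})^m dV_{ik,j}$ and forming $L = \sum_j[\lambda_{j,M}\mathcal{M}_j + \lambda_{j,V}\mathcal{V}_j] - \mu\bigl(\prod_j \lambda_{j,M}\lambda_{j,V} - 1\bigr)$, the key is that $\partial L/\partial \lambda_{\ell,M}$ subtracts $\mu\,\lambda_{\ell,V}\prod_{j\neq\ell}\lambda_{j,M}\lambda_{j,V}$; using the constraint to rewrite the partial product as $1/(\lambda_{\ell,M}\lambda_{\ell,V})$, the stray factor $\lambda_{\ell,V}$ cancels and one obtains $\lambda_{\ell,M}\mathcal{M}_\ell = \mu$. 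The symmetric computation gives $\lambda_{\ell,V}\mathcal{V}_\ell = \mu$, so \emph{the same} multiplier governs all $2P$ weights; substituting into the constraint gives $\mu^{2P} = \prod_h \mathcal{M}_h \mathcal{V}_h$, whence $\mu = (\prod_h \mathcal{M}_h \mathcal{V}_h)^{1/(2P)}$ and the formulas (\ref{W-Loc-1V-Prod_new}) follow; case $\mathbf{\Pi}$-f is identical with the outer $i$-sum removed and the constraint imposed within each cluster.

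Finally, to certify that these stationary points are minima rather than saddles I would appeal to the AM--GM inequality, which also furnishes a constraint-free one-line confirmation: for $\mathbf{\Pi}$-a, $\sum_j \lambda_j D_j \ge P\bigl(\prod_j \lambda_j D_j\bigr)^{1/P} = P\bigl(\prod_j D_j\bigr)^{1/P}$, with equality exactly when all $\lambda_j D_j$ coincide, matching the Lagrange solution; for $\mathbf{\Pi}$-e the same inequality applied to the $2P$ nonnegative terms $\{\lambda_{j,M}\mathcal{M}_j, \lambda_{j,V}\mathcal{V}_j\}$ yields the bound $2P\bigl(\prod_j \mathcal{M}_j\mathcal{V}_j\bigr)^{1/(2P)}$, again attained precisely at the computed weights. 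This establishes that each formula gives the global minimizer over the positive weights satisfying the corresponding product constraint.
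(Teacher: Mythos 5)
Your proposal is correct, and its core is the same device the paper uses: hold $\mathbf{G}$ and $\mathbf{U}$ fixed, observe that $J_A$ is linear in the weights with nonnegative aggregated coefficients, and impose the product-to-one constraints via Lagrange multipliers. The paper's own proof, however, stops after writing down the six Lagrangians and asserting that the weights are ``determined by solving the respective system of equations''; you actually solve the systems, and the cancellation $\lambda_{\ell,V}\prod_{j\neq\ell}\lambda_{j,M}\lambda_{j,V} = 1/\lambda_{\ell,M}$ in case $\mathbf{\Pi}$-e is precisely the step the paper leaves implicit --- it is the only place where the coupled constraint behaves differently from the classical separate constraints, and it explains why a single multiplier ends up governing all $2P$ weights. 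More substantively, your closing AM--GM argument supplies something the paper omits altogether: since $J_A$ is linear in $\boldsymbol{\Lambda}$ and the feasible set $\{\prod_j \lambda_j = 1,\ \lambda_j>0\}$ is not convex, first-order stationarity by itself does not certify a minimum; the inequality $\sum_j \lambda_j D_j \ge P\bigl(\prod_j D_j\bigr)^{1/P}$, with equality exactly at the computed weights, upgrades the stationary point to the global constrained minimizer, which is what the alternating algorithm actually needs for the monotone decrease of $J_A$. The only caveat, shared with the paper, is that the closed forms presuppose that all aggregated distances $D_j$ (resp.\ $\mathcal{M}_j$, $\mathcal{V}_j$) are strictly positive; if one of them vanishes, the constrained minimand has no positive minimizer and the displayed formulas divide by zero.
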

		\begin{proof}
			With $\mathbf{G}$ and $\mathbf{U}$ fixed, this step aims the computation of the matrix of relevance weights $\boldsymbol{\Lambda}$ assuming the constraints listed before.
			The minimization of the criterion $J_A$ is obtained from the method of Lagrange Multipliers. The relevance weights are obtained by minimizing the following Lagrangian equations for the product-to-one constraints, as follows:
			\begin{align}
				\mathbf{\Pi}\textbf{-a):}\;{\mathcal L} =& \sum_{i=1}^c \sum_{k=1}^N\sum_{j=1}^P  (u_{ik})^m \,\lambda_jd_W(y_{kj},g_{ij})-\theta \prod_{j=1}^P (\lambda_{j} - 1);\\
				\mathbf{\Pi}\textbf{-b):}\;{\mathcal L} =& \sum_{i=1}^c \sum_{k=1}^N\sum_{j=1}^P  (u_{ik})^m \,\left[\lambda_{j,M}dM_{ik,j}+\lambda_{j,V}dV_{ik,j})\right]+\nonumber\\
				& -\theta_M \prod_{j=1}^P (\lambda_{j,M} - 1)-\theta_V \left(\prod_{j=1}^P \lambda_{j,V} - 1\right);\\
				\mathbf{\Pi}\textbf{-c):}\;{\mathcal L} = &\sum_{i=1}^c \sum_{k=1}^N\sum_{j=1}^P  (u_{ik})^m \,\lambda_{ij}d_W(y_{kj},g_{ij})-\sum_{i=1}^c\theta_i\left(\prod_{j=1}^P \lambda_{ij} - 1\right);
			\end{align}
			\begin{align}
				\mathbf{\Pi}\textbf{-d):}\;{\mathcal L} =& \sum_{i=1}^c \sum_{k=1}^N\sum_{j=1}^P  (u_{ik})^m \,\left[\lambda_{ij,M}dM_{ik,j}+\lambda_{ij,V}dV_{ik,j})\right]+\nonumber\\
				& -\sum_{i=1}^c\theta_{i,M}\left(\prod_{j=1}^P \lambda_{ij,M} - 1\right)-\sum_{i=1}^c\theta_{i,V}\left(\prod_{j=1}^P \lambda_{ij,V} - 1\right);\\
				\mathbf{\Pi}\textbf{-e):}\;{\mathcal L} = &\sum_{i=1}^c \sum_{k=1}^N\sum_{j=1}^P  (u_{ik})^m \,\left[\lambda_{j,M}dM_{ik,j}+\lambda_{j,V}dV_{ik,j})\right]+\nonumber\\
				& -\theta\left(\prod_{j=1}^P \lambda_{j,M}\lambda_{j,V} - 1\right);\\
				\mathbf{\Pi}\textbf{-f):}\;{\mathcal L} =& \sum_{i=1}^c \sum_{k=1}^N\sum_{j=1}^P  (u_{ik})^m \,\left[\lambda_{ij,M}dM_{ik,j}+\lambda_{ij,V}dV_{ik,j})\right]+\nonumber\\
				& -\sum_{i=1}^c\theta_i\left(\prod_{j=1}^P \lambda_{ij,M}\lambda_{ij,V} - 1\right).
			\end{align}
			
			By setting the partial derivatives of ${\cal L}$ with respect to the $\lambda$'s and the $\theta$ parameters, we obtain the system of equations of the first order condition.

			The elements of the matrix $\mathbf{\Lambda}$ are determined by solving the respective system of equations.
		\end{proof}
		
		\subsubsection{Computation of the membership degrees}
		This section gives the optimal solution for the fuzzy cluster partition during the affectation step of the FCM and AFCM algorithms.
		
		For FCM algorithm, with $\mathbf{G}$ fixed, the second step computes the matrix of membership degrees $\mathbf{U}$ assuming the following constraints:
		$$
		\sum_{i=1}^c u_{ik} = 1, \, u_{ik} \in [0,1]
		$$
		Let $A = \{i \in \{1,\ldots,c\}: d(\mathbf{y}_k,\mathbf{g}_i) = 0\}$, where $d(\mathbf{y}_k,\mathbf{g}_i)$ is defined according to equation (\ref{dist-1}), i.e.,
		$$
		d(\mathbf{y}_k,\mathbf{g}_i) = \sum\limits^{P}_{j=1}({\bar y}_{kj}-{\bar g}_{{kj}})^{2}+ \sum\limits^{P}_{j=1}d^2_W(y^C_{ij},g^C_{kj})
		$$
		\begin{itemize}
			\item if $A = \emptyset$ (i.e., no object coincides with any of the representatives), the minimization of the criterion $J$ is obtained from the method of Lagrange Multipliers:
			\begin{equation}
				{\cal L}  = \sum_{i=1}^c \sum_{k=1}^N  (u_{ik})^m \; d(\mathbf{y}_k,\mathbf{g}_i)
				- \sum_{k=1}^N \theta_k \left(\sum_{i=1}^c u_{ik} - 1\right) \nonumber
			\end{equation}
			By setting the derivatives of ${\cal L}$ with respect to $u_{ik}$ and $\theta_k$ to zero, we obtain the components of the matrix $\mathbf{U}$  of membership degrees:
			\begin{eqnarray}\label{pert-1}
				u_{ik} &=&
				\left[
				\sum_{h=1}^c
				\left(
				\frac
				{
					d(\mathbf{y}_k,\mathbf{g}_i)
				}
				{
					d(\mathbf{y}_k,\mathbf{g}_h)
				}
				\right)^{\frac{1}{m-1}}
				\right]^{-1}
			\end{eqnarray}
			\item if $A \neq \emptyset$ then
			\begin{equation}\label{pert-2}\left\{
				\begin{array}{ll}
					u_{ik}=1/|A|&, \forall k \in A \\
					u_{is}=0 &,\forall s \notin A
				\end{array}
				\right.\end{equation}
		\end{itemize}
		
		For the AFCM algorithm with objective function defined according to equations from (\ref{crit-1-1}) to (\ref{crit-1-4}), and with $\mathbf{G}$ and $\mathbf{\Lambda}$ fixed, the third step computes the matrix of membership degrees $\mathbf{U}$ assuming again the following constraints:
		$$
		\sum_{i=1}^c u_{ik} = 1, \, u_{ik} \in [0,1].
		$$
		Let $A = \{i \in \{1,\ldots,c\}: d(\mathbf{y}_k,\mathbf{g}_i|\mathbf{\Lambda}) = 0\}$, where $d(\mathbf{y}_k,\mathbf{g}_i|\mathbf{\Lambda})$ is defined according to equations from (\ref{dist-1-1}) to (\ref{dist-1-4}),
		
		\begin{itemize}
			\item if $A = \emptyset$ (i.e., no object coincides with any of the representatives), the minimization of the criterion $J_A$ is obtained from the method of Lagrange Multipliers:
			\begin{equation}
				{\cal L}  = \sum_{i=1}^c \sum_{k=1}^N  (u_{ik})^m \; d(\mathbf{y}_k,\mathbf{g}_i|\mathbf{\Lambda})
				- \sum_{k=1}^N \theta_k \left(\sum_{i=1}^C u_{ik} - 1\right) \nonumber
			\end{equation}
			By setting the derivatives of ${\cal L}$ with respect to $u_{ik}$ and $\theta_k$ to zero, we obtain the components of the matrix $\mathbf{U}$  of membership degrees:
			\begin{equation}\label{pert-3}
				u_{ik} =
				\left[
				\sum_{i=1}^c
				\left(
				\frac
				{
					d(\mathbf{y}_k,\mathbf{g}_i|\mathbf{\Lambda})
				}
				{
					d(\mathbf{y}_k,\mathbf{g}_h|\mathbf{\Lambda})
				}
				\right)^{\frac{1}{m-1}}
				\right]^{-1}
			\end{equation}
			\item if $A \neq \emptyset$ then the membership degree $u_{ik}$ is computed according to equation (\ref{pert-2})
		\end{itemize}
		
		\subsubsection{Algorithm}
		
		For the FCM algorithm, the two steps of computation of the prototypes and computation of the membership degrees are alternated until the convergence is obtained, i.e., the $J$ value change is small. Moreover, for the AFCM algorithm, the three steps of computation of the prototypes, computation of the weights as well as computation of the membership degrees are alternated until the convergence is obtained, i.e., the $J_A$ value change is small. The Algorithm \ref{alg:afcm-d} summarizes these steps.
		
		\begin{algorithm}
			\caption{General algorithm for FCM and AFCM}
			\label{alg:afcm-d}
			\begin{algorithmic}
				\STATE {\bfseries input: } ${\mathbf Y}$ the $k\times P$ distributional data table,\\
				$i$ (the number of clusters) and $m$ (the fuzzifier parameter)\\
				$T$ (maximum number of iterations), \\
				$0 < \varepsilon << 1$ (a tolerance parameter)
				\STATE {\bfseries initialization: } $t \leftarrow 0$; $\mathbf{\Lambda}^{(0)}=\mathbf{1}$, \\
				random initialization of $\mathbf{U}^{(0)}$\\
				
				\REPEAT
				\STATE  $t \leftarrow t + 1$;
					\STATE {\bf(Representation step)} Compute $\mathbf{G}^{(t)}$ using equation (\ref{prot-1});
				\STATE {\bf(Weighting step)}
				For AFCM method, compute $\mathbf{\Lambda}^{(t)}$
				using the suitable equation from Eq. (\ref{W-Glo-1V-Prod}) to Eq.(\ref{W-Glo-2C-Prod_new})
					\STATE {\bf(Allocation step)}
				\STATE For FCM method, compute $\mathbf{U}^{(t)}$ using the Eq. (\ref{pert-1}) or Eq. (\ref{pert-2}); \\
				\STATE For AFCM method, compute $\mathbf{U}^{(t)}$ using Eq. (\ref{pert-3}) or Eq. (\ref{pert-2});
				
				\STATE {\bf(Objective function computation)}
				\STATE For AFC method,  $J(\mathbf{G}^{(t)},\mathbf{U}^{(t)})$
				\STATE For AFCM method,  $J_A(\mathbf{G}^{(t)},\mathbf{\Lambda}^{(t)},\mathbf{U}^{(t)})$
				\UNTIL
\STATE For FCM method, $|J(\mathbf{G}^{(t)},\mathbf{U}^{(t)}) - J(\mathbf{G}^{(t-1)},\mathbf{U}^{(t-1)})| < \varepsilon$ or $t > T$; \\
		\STATE		For AFCM method, $|J_A(\mathbf{G}^{(t)},\mathbf{\Lambda}^{(t)},\mathbf{U}^{(t)}) - J_A(\mathbf{G}^{(t-1)},\mathbf{\Lambda}^{(t-1)},\mathbf{U}^{(t-1)})| < \varepsilon$ or $t > T$;
				\STATE{\bfseries output: }$\mathbf{G}$,$\mathbf{\Lambda}$, $\mathbf{U}$
			\end{algorithmic}
		\end{algorithm}
		
		\section{Application} \label{SEC_apply}
		In this section, two applications are performed on synthetic datasets and on a real-world one. Three distribution-valued synthetic datasets are generated according to three scenarios of clustering structure. In this case, since the apriori membership of each object is known, the proposed methods are validated by using external validity indexes. External validity indexes are generally used in a confirmatory analysis, or when simulations are performed  according to fixed generating processes of the data, or when two clusterings need to be compared. In general, external validity indexes compares the results of a clustering with respect to a predefined partition of the objects. As reported in \cite{Meila2007}, a number of indexes have been proposed for hard clustering. In the literature of fuzzy clustering validity assessment, two groups of indexes are discussed: a first group compare fuzzy clusterings to a crisp apriori assignment, while a second group assumes a fuzzy apriori assignment of data (for example, for comparing two fuzzy clusterings)(see, for example, \cite{Huller_12}). We remark that the second group can be considered a generalization of the first one when one of the two assignments is crisp.
		In this paper, when simulation on artificial data are proposed, we use the modified version of the \textit{Rand}, \textit{Jaccard}, \textit{Folkes-Mallows} and \textit{Hubert} indexes for fuzzy clustering algorithms proposed in\cite{Frigui2007}.
		
		The application on real-world dataset is performed on age-sex pyramids data collected by the Census Bureau of USA in 2014 on 228 countries. In this case, we use internal validity indexes for defining the number of fuzzy clusters. Validity indexes are either validity indexes defined for fuzzy c-means or fuzzy version of indexes used for crisp clustering algorithms.
		
		A first group of validity indexes for FCM are  the partition coefficient $I_{PC}$ and the partition entropy $I_{PE}$, both proposed by \cite{BezdekAL1999},  and the modified partition coefficient $I_{MPC}$  \cite{Dave96}. They are based only on the memberships resulting from the algorithm and do not take into consideration the distances between objects and prototypes. A second set of internal validity indexes exploits the distances between objects for obtaining a measure of compactness and/or separation of the obtained solution. In particular, we used three indexes suitably modified for taking into account the adaptive distances. With respect to the original formulations, the Euclidean distance is substituted by the adaptive distances.
		In this case, three indexes are used for validating the algorithms: the Xie-Beni index ($I_{XB}$)\cite{XieBeni}, the fuzzy silhouette index ($I_{FS}$)\cite{Silh_Campello06}, and quality of partition index ($I_{QPI}$)\cite{Carvalho2006}. While $I_{XB}$ and $I_{FS}$ are useful also for discovering a suitable number of clusters, $I_{QPI}$ does not since it is monotonic with respect the number of clusters. However, being $I_{QPI}$ a generalization of the r-squared statistics, it is  a relative measure of clustering separation ranging in $[0,1]$, we use it for providing a comparison and an interpretation the obtained solutions of the different algorithms after than the number of clusters is fixed.
		
		All the elaborations have been performed under the R\footnote{\texttt{https://cran.r-project.org}} environment and the scripts are provided as supplementary material.
		
		\subsection{Synthetic data}
		Wasserstein distance, even it was used for density functions, can be see as a norm between two quantile functions. With this in mind, we build several datasets where each object is described by distribution-valued data derived by a parametric family of quantile functions, such that each pair of quantile functions belongs to two different distributional variables. The generation of the quantile functions was done by using a model of quantile function proposed by Gilchrist \cite{Gilchrist}. Denoting with $Q(p)$ the quantile observed for a level of $p\in [0,1]$, Gilchrist \cite{Gilchrist} introduced a way for modeling the quantile function of a \textit{skew logistic distribution}, depending on three parameters: $\gamma$ a position parameter, $\eta>0$ a scale parameter, and $\delta$ a skewness parameter taking value in $[-1;1]$ (negative, resp.  positive, values are associated to left-skewed, resp. right-skewed, distributions). The explicit formula of such quantile function is:
		\begin{equation}\label{eq:GILCH}
			Q(t)=\gamma+\eta\left[\frac{1-\delta}{2}(\ln t)-\ln (1-t)\frac{1+\delta}{2}\right]\;t\in(0,1).
		\end{equation}
		In Fig. \ref{Fig1} and Fig. \ref{Fig2} are shown the quantile functions and the associated density functions of three examples of skew logistic distributions.
		\begin{figure}[!t]
			\centering
			\includegraphics[width=2.5in]{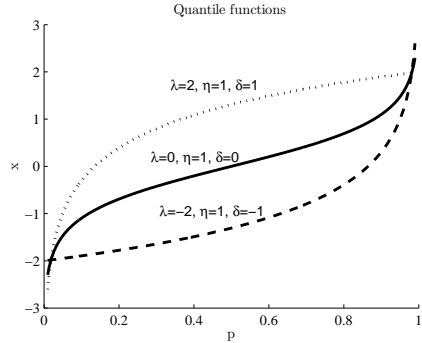}
			\caption{Quantile functions of three skew logistic distributions}
			\label{Fig1}
		\end{figure}
		\begin{figure}[!t]
			\centering
			\includegraphics[width=2.5in]{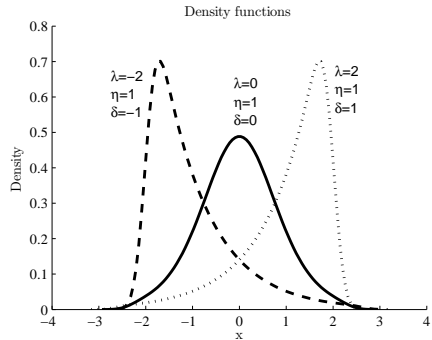}
			\caption{Density functions associated with quantile functions in fig. \ref{Fig1}}\label{Fig2}
		\end{figure}
		This family of quantile functions are related to random variables having the following expectation (denoted by $\mu$) and standard deviation (denoted by $\sigma$):
		\begin{equation}
			\begin{array}{l}
				\mu  = \int\limits_{ - \infty }^{ + \infty } {x \cdot f(x)\,dx}  = \int\limits_0^1 {Q(p)\,dp}  = \gamma  + \eta  \cdot \delta \\
				\sigma  = \sqrt {\int\limits_0^1 {{{\left[ {Q(p)\,} \right]}^2}dp - {\mu ^2}} }  = \eta \sqrt {{\delta ^2} - \frac{1}{{12}}{\pi ^2}\left( {{\delta ^2} - 1} \right)}.
			\end{array}\label{GILCH_mom}
		\end{equation}
		
		The choice of this kind of family of quantile functions is also motivated by the possibility of expressing the $L2$ Wasserstein distance in closed form. Given two quantile functions, namely, $Q_1$ and $Q_2$, parameterized, respectively by $\gamma_1$, $\eta_1$ and $\delta_1$, and by $\gamma_2$, $\eta_2$ and $\delta_2$, associate as follows:
		$$\begin{array}{l}
		{d^2_W}({f_1},{f_2}) = \int\limits_0^1 {{{\left[ {{Q_1}(t) - {Q_2}(t)} \right]}^2}dt}  = \\
		= {\left( {{\gamma _1} - {\gamma _2}} \right)^2} + {\left( {{\eta _1}{\delta _1} - {\eta _2}{\delta _2}} \right)^2} + 2\left( {{\gamma _1} - {\gamma _2}} \right)\left( {{\eta _1}{\delta _1} - {\eta _2}{\delta _2}} \right) + \\
		+ {\left( {{\eta _1}{\delta _1} - {\eta _2}{\delta _2}} \right)^2} + \frac{1}{{12}}\pi \left[ {{{\left( {{\eta _1} - {\eta _2}} \right)}^2} - {{\left( {{\eta _1}{\delta _1} - {\eta _2}{\delta _2}} \right)}^2}} \right] = \\
		= \left( {{\mu _1} - {\mu _2}} \right) + {\left( {{\eta _1}{\delta _1} - {\eta _2}{\delta _2}} \right)^2} + \frac{1}{{12}}\pi \left[ {{{\left( {{\eta _1} - {\eta _2}} \right)}^2} - {{\left( {{\eta _1}{\delta _1} - {\eta _2}{\delta _2}} \right)}^2}} \right];
		\end{array}$$
		where the $dM_{12}$ and $dV_{12}$ are, respectively
		$$\begin{array}{l}
		d{M_{12}} = {\left( {{\mu _1} - {\mu _2}} \right)^2} = {\left[ {\left( {{\gamma _1} + {\eta _1}{\delta _1}} \right) - \left( {{\gamma _2} + {\eta _2}{\delta _2}} \right)} \right]^2}\\
		d{V_{12}} = {\left( {{\eta _1}{\delta _1} - {\eta _2}{\delta _2}} \right)^2} + \frac{1}{{12}}\pi \left[ {{{\left( {{\eta _1} - {\eta _2}} \right)}^2} - {{\left( {{\eta _1}{\delta _1} - {\eta _2}{\delta _2}} \right)}^2}} \right].
		\end{array}$$
		Finally, given a set $N$ quantile functions $Q_i$ of such a family and a set of $N$ positive weights $[w_i]$, the weighted mean is a quantile function $\bar{Q}$ having the following expression:
		\begin{eqnarray*}
			\bar Q(t) &=& \frac{{\sum\limits_{k = 1}^N {{w_i}{\kern 1pt} {Q_i}(t)} }}{{\sum\limits_{k = 1}^N {{w_k}} }} = \frac{{\sum\limits_{k = 1}^N {{w_k}\left\{ {{\gamma _k} + {\eta _k}\left[ {\frac{{1 - {\delta _k}}}{2}\left( {\ln p} \right) - \ln \left( {1 - p} \right)\frac{{1 + {\delta _k}}}{2}} \right]} \right\}} }}{{\sum\limits_{k = 1}^N {{w_k}} }} = \\
			&=& \bar \gamma  + \bar \eta \left\{ {\frac{{\left( {\ln p} \right)}}{2}\left( {1 - \frac{{\overline {\eta \delta } }}{{\bar \eta }}} \right) - \frac{{\ln \left( {1 - p} \right)}}{2}\left( {1 - \frac{{\overline {\eta \delta } }}{{\bar \eta }}} \right)} \right\} = \\
			&=& \bar \gamma  + \bar \eta \left\{ {\frac{{\left( {\ln p} \right)}}{2}\left( {1 - \overline \delta  } \right) - \frac{{\ln \left( {1 - p} \right)}}{2}\left( {1 - \overline \delta  } \right)} \right\}\quad;\quad t\in[0,1];
		\end{eqnarray*}
		
		where, being:
		$$\overline {\eta \delta }  = \frac{{\sum\limits_{k = 1}^N {{w_k}{\eta _k}{\delta _k}} }}{{\sum\limits_{k = 1}^N {{w_k}} }};$$
		if follows that $\bar{Q}$ is a quantile function of the same family, having as parameters:
		$$\bar \gamma  = \frac{{\sum\limits_{k = 1}^N {{w_k}{\gamma _k}} }}{{\sum\limits_{k = 1}^N {{w_k}} }};\bar \eta  = \frac{{\sum\limits_{k = 1}^N {{w_k}{\eta _k}} }}{{\sum\limits_{k = 1}^N {{w_k}} }};\bar \delta  = \frac{{\overline {\eta \delta } }}{{\bar \eta }}.$$
		
		We considered $c=3$ clusters each one of $N=100$ objects described by $P=2$ distributional variables. Each distributional data is defined by three parameters $\gamma_{kj}$, $\eta_{kj}$ and $\delta_{kj}$ sampled from a Gaussian having different parameters for each cluster and each variable.
		We considered four scenarios as follows:
		We performed all the $6$ algorithms variants of fuzzy c-means plus the base fuzzy c-means algorithm and we evaluated the obtained partitions using the proposed external validity indexes, both in their fuzzy version and in their crisp version. In the last case, each object is assigned to the cluster with the highest membership.
		Because the proposed algorithms are sensitive to the initialization of centers, we repeated $20$ times the initialization step and we reported the results for the solution having the minimum criterion value for each algorithm.
		Before, fixing the $m$ parameter, we did a preliminary study using a grid of values for $m\in\{1.5, 1.7, 2.0, 2.1, 2.5\}$. According to the observed results, we fixed $m=1.5$.
		
		The three scenario have been chosen accordingly to the following criteria:
		\begin{description}
			\item[Scenario 1] Three clusters have a similar within dispersion for each variable, for each cluster, and both for the position component and the dispersion one.
			\item[Scenario 2] Three clusters have different within dispersion for each variable, for each component and for each cluster.
			\item[Scenario 3] Three clusters have different within dispersion for each single variable, for each component and for each cluster, but the cluster structure related to the position parameters is very weak.
		\end{description}

		\subsubsection{Scenario 1}
		In the first scenario, we set up three clusters having similar within dispersion for the position and the variability. In this case, the distributional data are generated according to Gaussian distributions of the parameter as listed in Tab. \ref{SC1_tab}. The bivariate plots for each parameter are shown in Fig. \ref{SC1_fig}.
		\begin{table}[htbp]
			\centering
			\begin{tabular}{|c|ccc|ccc|}
				\hline
				& \multicolumn{3}{|c|}{\bf Var. 1} & \multicolumn{3}{|c|}{\bf Var. 2}  \\
				Clusters & $\gamma$ &$\eta$ &$\delta$ &$\gamma$ &$\eta$ &$\delta$  \\
				\hline
				1 &$\mathcal{N}(0,0.8)$&$\mathcal{N}(7,0.3)$&$\mathcal{N}(0.2,0.002)$  &
				$\mathcal{N}(-3,4)$&$\mathcal{N}(10,0.5)$&$\mathcal{N}(0.2,0.002)$  \\
				2 &$\mathcal{N}(-3,0.80)$&$\mathcal{N}(8,0.3)$&$\mathcal{N}(0.2,0.002)$  &
				$\mathcal{N}(0,4)$&$\mathcal{N}(8,0.5)$&$\mathcal{N}(0.2,0.02)$  \\
				3 &$\mathcal{N}(3,0.80)$&$\mathcal{N}(9,0.3)$&$\mathcal{N}(0.2,0.002)$  &
				$\mathcal{N}(0,4)$&$\mathcal{N}(10,0.5)$&$\mathcal{N}(0.2,0.002)$ \\
				\hline
			\end{tabular}
			\caption{Scenario 1: Sample distributions of parameters}\label{SC1_tab}
		\end{table}
		
		\begin{figure}[htbp]
			\centering
			\includegraphics[width=0.45\textwidth]{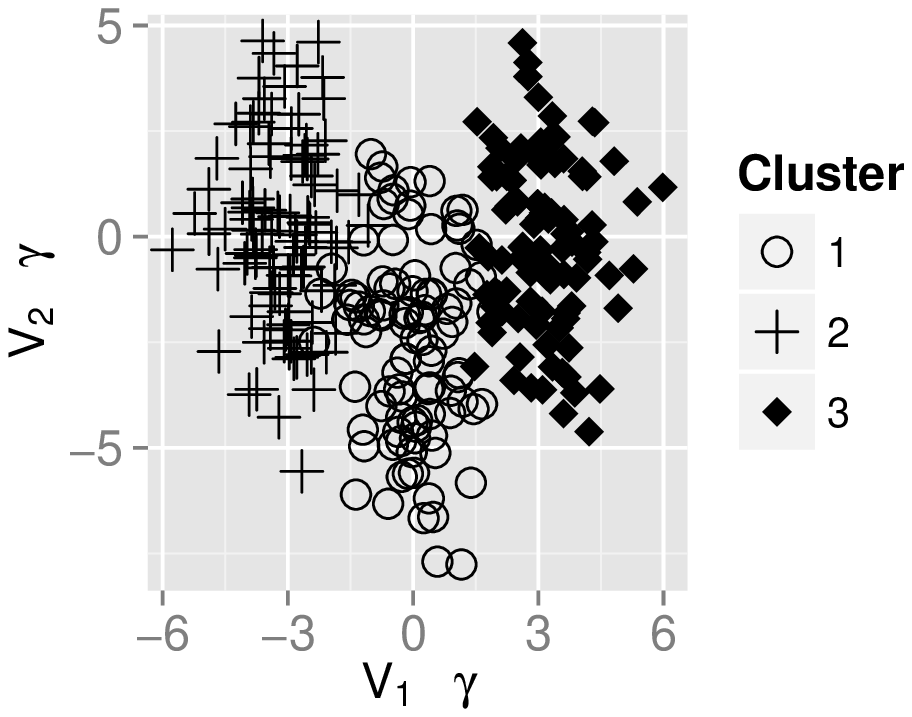}
			\includegraphics[width=0.45\textwidth]{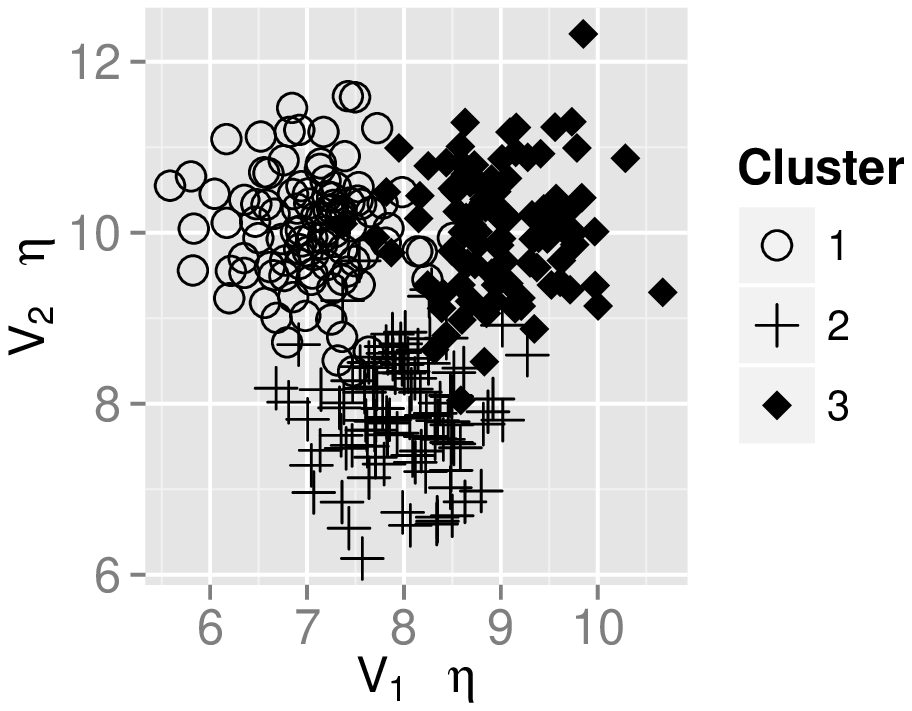}
			\includegraphics[width=0.45\textwidth]{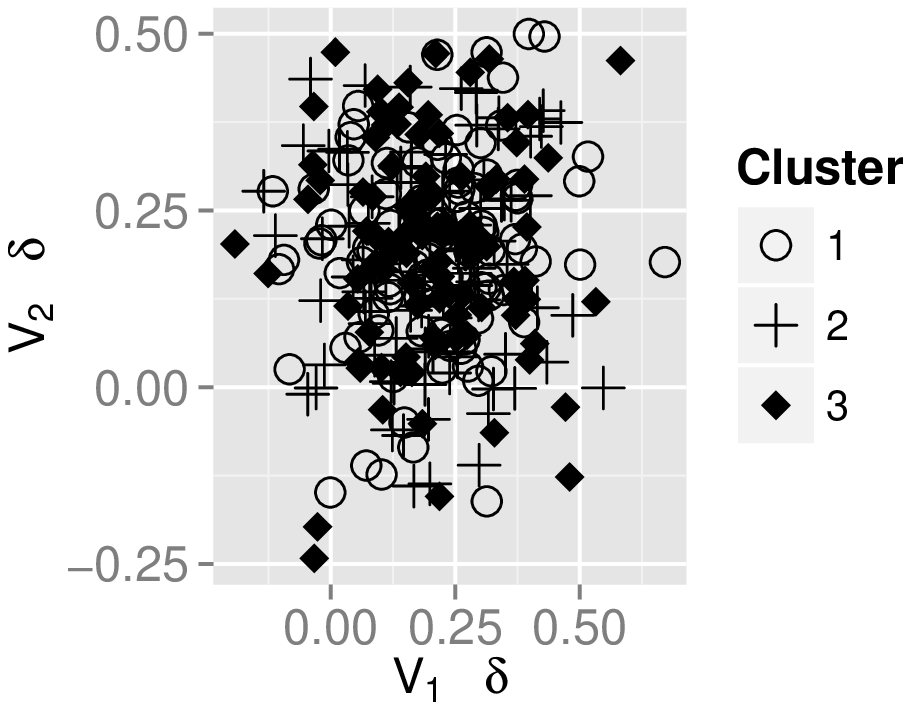}
			\caption{Scenario 1: parameters}\label{SC1_fig}
		\end{figure}
		In Table \ref{VAR_SC_1} are reported the within cluster dispersion for each variable and each component, and the total dispersion of the dataset.
		We remark that in  Table \ref{VAR_SC_1} are reported the Sum of Squares  within each predefined class of objects according to each variable ($SSE_j$) and each component ($SSE_{j,m}$ and $SSE_{j,d}$), where $SSE_j=SSE_{j,m}+SSE_{j,d}$,\\  $SSE_{j,m}=\sum\limits_{k\in C_i}(\bar{y}_{kj}-\bar{g}_{ij})^2$ and $SSE_{j,m}=\sum\limits_{k\in C_i}d^2_W(y^c_{kj},g^c_{ij})$,\\ where $C_i$ denote the i-th class, $g_{ij}$ denotes the Wasserstein-barycenter of the $i-th$ class for the $j-th$ variable having $\bar{g}_{ij}$ as mean $g^c_{ij})$ as centered distribution. Further, $WSSE$ is the sum within Sum of Squares for each variable and each component, while $TSSE$ is the Total Sum of Squares of the dataset for each variable and each component. The Quality of Partition Indexes ($QPI$) are equal to 1 minus the ratio between the $SSE_j$ (respectively, $SSE_{j,m}$ and $SSE_{j,d}$) and the corresponding $TSSE$. They measure the discriminant power of each variable or  component for the three predefined classes of objects, namely, suggests if a class structure exists. Being a generalization of the $R-squared$ statistics, the more the $QPI$ is close to one the more the variable or the component is relevant for discriminate the classes.
		
		The parameters choice induces three clusters having a similar internal dispersion and a cluster structure that, observing the $QPI$ values, is more evident for variable 1 w.r.t. variable 2.
		\begin{table}[htbp]
			\centering
			\begin{tabular}{|l|rrr|rrr|}
				\hline
				& \multicolumn{3}{|c|}{\bf Var.1} &\multicolumn{3}{|c|}{\bf Var.2}\\
				Clusters & $SSE_1$ & $SSE_{1,m}$ & $SSE_{1,d}$ & $SSE_2$ & $SSE_{2,m}$ & $SSE_{2,d}$ \\
				\hline
				Cl. 1 & 106.84 & 89.77 & 17.06 & 548.39 & 522.86 & 25.53 \\
				Cl. 2 & 111.70 & 94.53 & 17.17 & 465.44 & 444.62 & 20.82 \\
				Cl. 3 & 108.58 & 87.57 & 21.01 & 464.05 & 433.09 & 30.96 \\
				\hline
				WSSE & $WSSE_1$ & $WSSE_{1,m}$ & $WSSE_{1,d}$ & $WSSE_2$ & $WSSE_{2,m}$ & $WSSE_{2,d}$ \\
				& 327.13 & 271.88 & 55.24 & 1477.88 & 1400.57 & 77.31 \\
				\hline
				TSSE & $TSSE_1$ & $TSSE_{1,m}$ & $TSSE_{1,d}$ & $TSSE_2$ & $TSSE_{2,m}$ & $TSSE_{2,d}$ \\
				& 2534.04 & 2425.26 & 108.78 & 1918.83 & 1749.24 & 169.59 \\
				\hline
				$QPI$ & $QPI_1$ & $QPI_{1,m}$ & $QPI_{1,d}$ & $QPI_2$ & $QPI_{2,m}$ & $QPI_{2,d}$ \\
				& 0.8709 & 0.8879 & 0.4921 & 0.2298 & 0.1993 & 0.5441 \\
				\hline
			\end{tabular}
			\caption{Scenario 1: Dispersion (SSE) of clusters and of the datasets. $WSSE$ is the within cluster sum of squares, $TSSE$ is the total sum of squares, $QPI=1-WSSE/TSSE$.}\label{VAR_SC_1}
		\end{table}

		We executed the FCM and the six AFCM algorithms and we reported the external validity indexes in Tab. \ref{SC1_resu}, both for their fuzzy and crisp version. As expected, the algorithms based on adaptive distances performs better than the standard FCM algorithm and the differences are not very large among them. The best performance is observed for algorithm $\Pi-f$ (the values in bold), namely, the algorithm where relevance weights are computed for each cluster and for each component.
		\begin{table}[hbtp]
			\centering
			\caption{Scenario 1: external validity indexes, $m=1.5$}\label{SC1_resu}
			\begin{tabular}{|l|rrrr|rrrr|}
				\hline
				& \multicolumn{4}{|c|}{\textbf{Fuzzy partition}}&\multicolumn{4}{|c|}{\textbf{Crisp partition}}\\
				\textbf{Method} & \textbf{ARI} & \textbf{Jacc} & \textbf{FM} & \textbf{Hub} & \textbf{ARI} & \textbf{Jacc} & \textbf{FM} & \textbf{Hub} \\
				\hline
				\multicolumn{9}{|c|}{FCM}\\
				\hline
				FCM &0.7841 & 0.5100 & 0.6755 & 0.5137 & 0.8171 & 0.5697 & 0.7259 & 0.5887\\
				\hline
				\multicolumn{9}{|c|}{AFCM}\\
				\hline
				$\Pi-a$ & 0.8994 & 0.7363 & 0.8481 & 0.7729 & 0.9573 & 0.8788 & 0.9355 & 0.9036  \\
				$\Pi-b$ & 0.9052 & 0.7496 & 0.8569 & 0.7860 & 0.9572 & 0.8785 & 0.9353 & 0.9033\\
				$\Pi-c$ & 0.9016 & 0.7416 & 0.8516 & 0.7781 & 0.9613 & 0.8896 & 0.9416 & 0.9126 \\
				$\Pi-d$ & 0.9071 & 0.7542 & 0.8599 & 0.7904 & 0.9613 & 0.8896 & 0.9416 & 0.9126 \\
				$\Pi-e$ & 0.9137 & 0.7695 & 0.8697 & 0.8052 & \textbf{0.9911} & \textbf{0.9736} &\textbf{ 0.9866} & \textbf{0.9800}\\
				$\Pi-f$ &\textbf{0.9156} & \textbf{0.7740} &\textbf{ 0.8726} &\textbf{ 0.8095} & \textbf{0.9911} & \textbf{0.973}6 & \textbf{0.9866} & \textbf{0.9800} \\
				\hline
			\end{tabular}
		\end{table}
		
		In Tab. \ref{SC1_wei} are reported the relevance weights for algorithm $\Pi-f$. First of all we can observe that the weights for each component are similar for each cluster, suggesting that the within dispersion of clusters is similar for the two components of the two variables. As expected, We can observe that the component related to the position is more important for variable 1 than for variable 2, because the internal dispersion of the $\gamma$ parameters is very high for variable two w.r.t. variable 1, while, considering the product-to-one constraint, not a great difference is observed for the variability component. This confirm the usefulness of adaptive distances, because a component has a higher weight when a lower dispersion is observed.
		\begin{table}[ht]
			\centering
			\caption{Scenario 1: Relevance weights for algorithm $\Pi-f$}\label{SC1_wei}
			\begin{tabular}{|l|cc|cc|}
				\hline
				&\multicolumn{2}{|c|}{\textbf{Var.1}}&\multicolumn{2}{|c|}{\textbf{Var. 2}}\\
				Cluster & $\lambda_{1,M}$ & $\lambda_{1,V}$ & $\lambda_{2,M}$ & $\lambda_{2,V}$ \\
				\hline
				1&  0.7401 & 3.9772 &0.1310 & 2.5932 \\
				2& 0.7700 & 3.3488 & 0.1661 & 2.3345 \\
				3& 0.6428 & 3.6056 & 0.1500 & 2.8767 \\
				\hline
			\end{tabular}
		\end{table}

		\subsubsection{Scenario 2}
		For scenario 2, we set up the three clusters having different within dispersion for the position and the variability components for each cluster. In this case, the distributional data are generated according to Gaussian distributions of the parameter as listed in Tab. \ref{SC2_tab}. The bivariate plots for each parameter are shown in Fig. \ref{SC2_fig}.
		\begin{table}[htbp]
			\centering
			\begin{tabular}{|c|ccc|ccc|}
				\hline
				& \multicolumn{3}{|c|}{\bf Var. 1} & \multicolumn{3}{|c|}{\bf Var. 2}  \\
				Clusters & $\gamma$ &$\eta$ &$\delta$ &$\gamma$ &$\eta$ &$\delta$  \\
				\hline
				1 &$\mathcal{N}(0,0.3)$&$\mathcal{N}(9,0.2)$&$\mathcal{N}(0.1,0.01)$  &
				$\mathcal{N}(1,0.3)$&$\mathcal{N}(5,0.2)$&$\mathcal{N}(0.2,0.02)$  \\
				2 &$\mathcal{N}(-1,0.01)$&$\mathcal{N}(8,0.05)$&$\mathcal{N}(-0.05,0.02)$  &
				$\mathcal{N}(0,0.1)$&$\mathcal{N}(8,0.8)$&$\mathcal{N}(0.05,0.01)$  \\
				3 &$\mathcal{N}(1,0.3)$&$\mathcal{N}(7,0.6)$&$\mathcal{N}(-0.2,0.005)$  &
				$\mathcal{N}(0,0.3)$&$\mathcal{N}(6,0.05)$&$\mathcal{N}(-0.1,0.002)$ \\
				\hline
			\end{tabular}
			\caption{Scenario 2: Sample distributions of parameters}\label{SC2_tab}
		\end{table}
		
		\begin{figure}[htbp]
			\centering
			\includegraphics[width=0.45\textwidth]{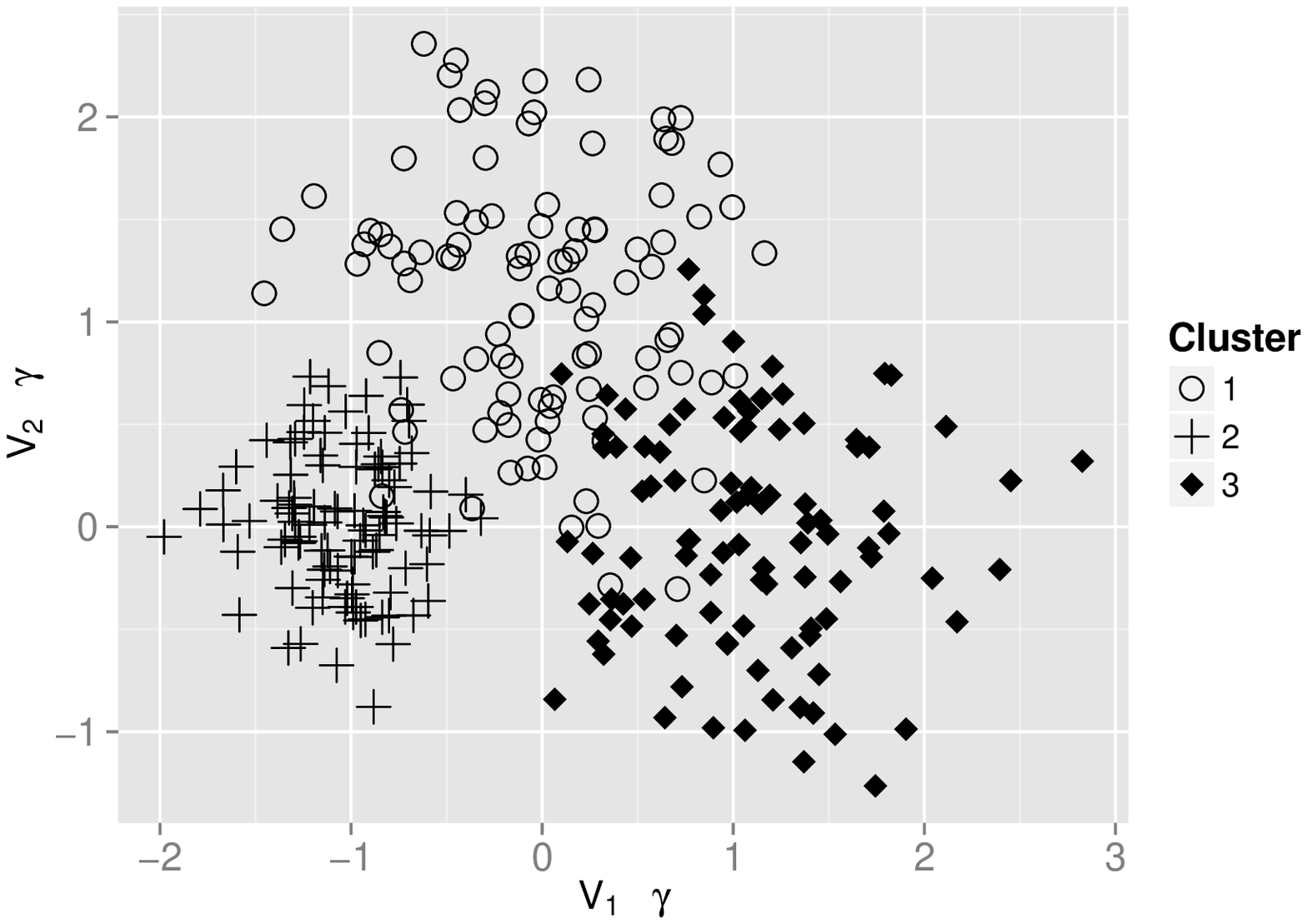}
			\includegraphics[width=0.45\textwidth]{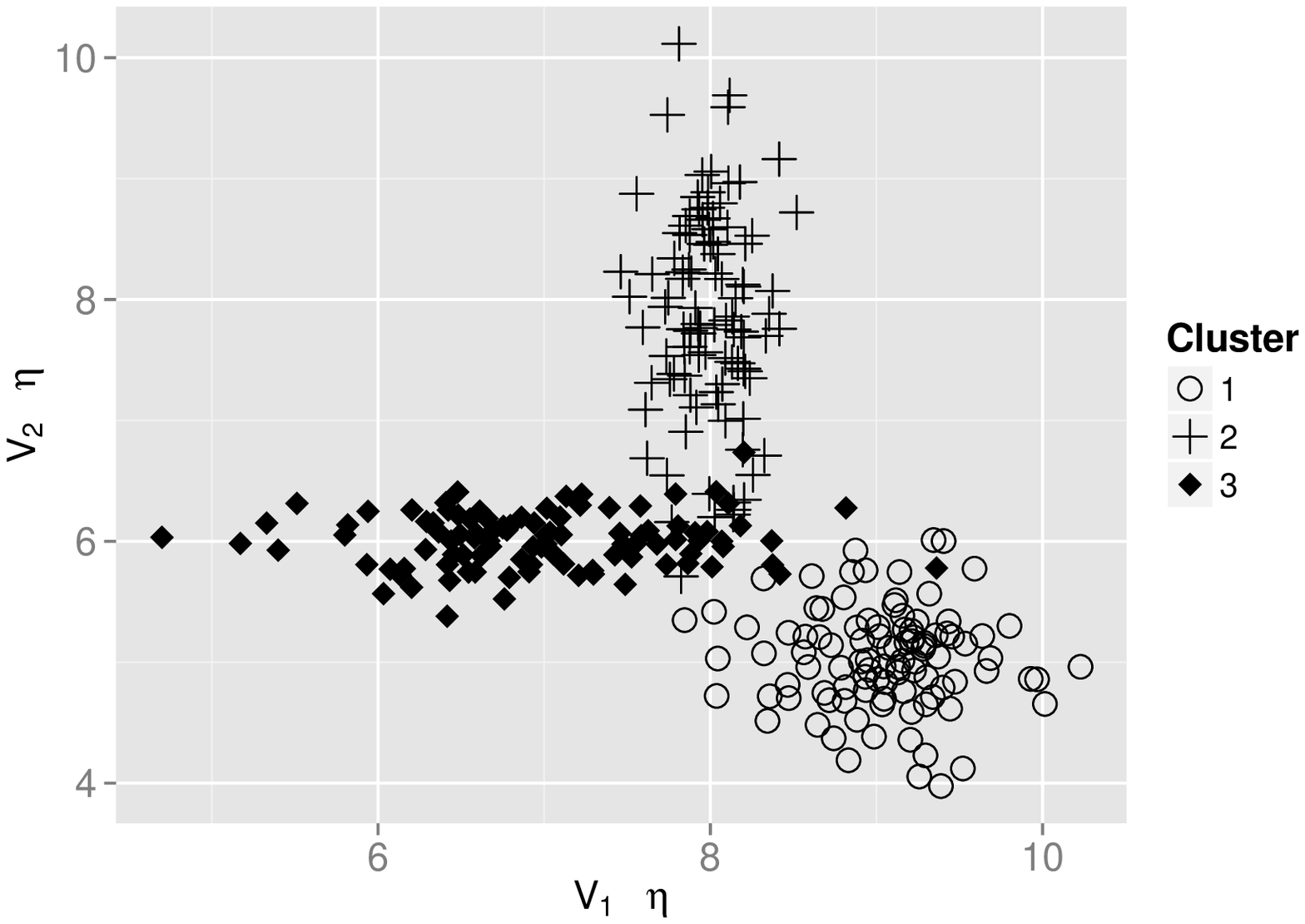}
			\includegraphics[width=0.45\textwidth]{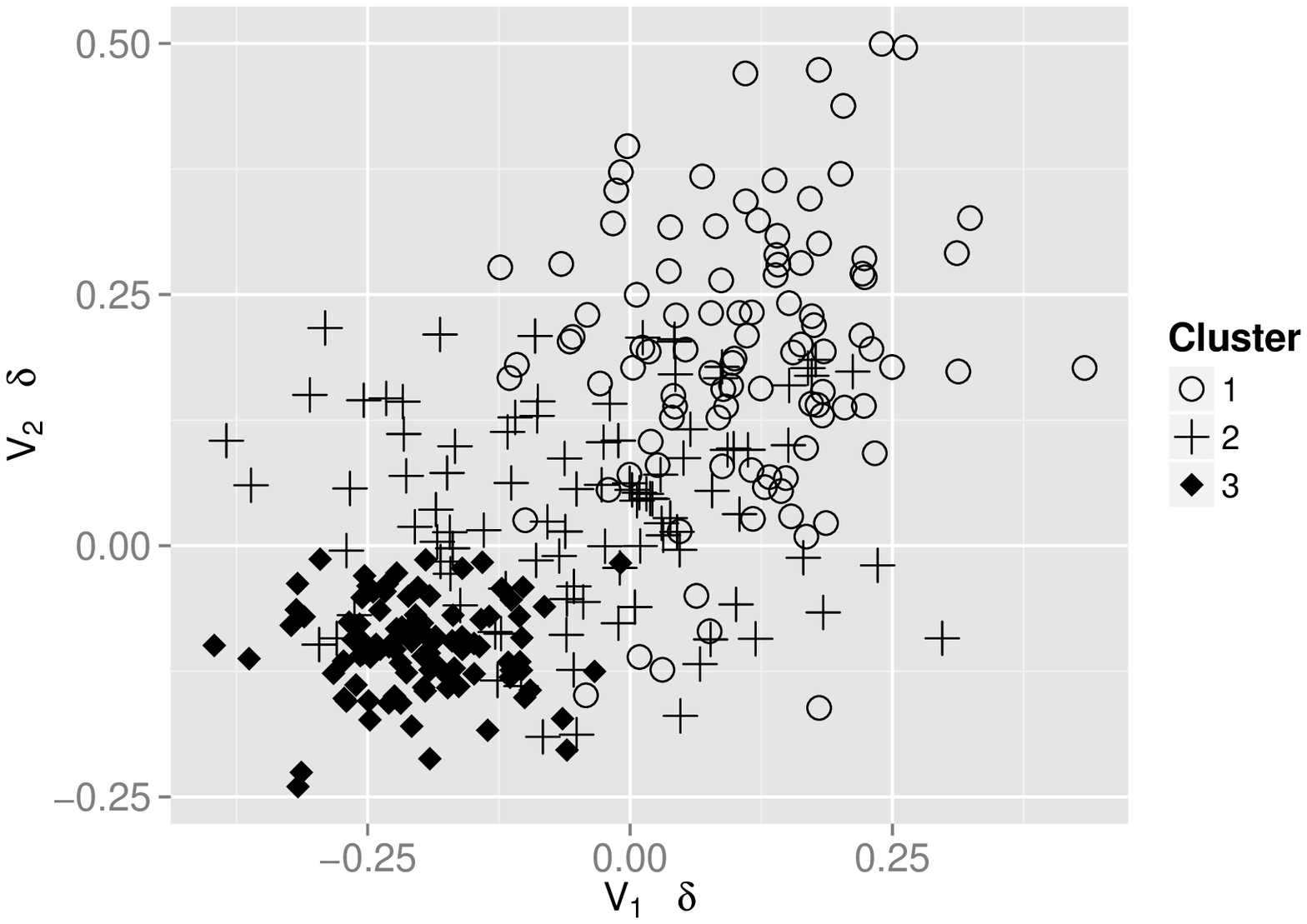}
			\caption{Scenario 2: parameters}\label{SC2_fig}
		\end{figure}
		In Table \ref{VAR_SC_2} are reported the within cluster dispersion for each variable and each component, and the total dispersion of the dataset. The last row reports the ratios between the between-cluster and the total dispersion (namely, the $QPI$ of the apriori clustered data), where the more the values are close to 1, the more there is a cluster structure. The parameters choice induces three clusters having a different within internal dispersion and a cluster structure.
		\begin{table}[htbp]
			\centering
			\begin{tabular}{|l|rrr|rrr|}
				\hline
				& \multicolumn{3}{|c|}{\bf Var.1} &\multicolumn{3}{|c|}{\bf Var.2}\\
				Clusters & $SSE_1$ & $SSE_{1,m}$ & $SSE_{1,d}$ & $SSE_2$ & $SSE_{2,m}$ & $SSE_{2,d}$ \\
				\hline
				Cl. 1 & 181.54 & 111.88 & 69.66 & 125.88 & 87.49 & 38.39 \\
				Cl. 2 & 223.58 & 131.15 & 92.43 & 148.00 & 84.41 & 63.59 \\
				Cl. 3 & 99.45 & 62.45 & 37.00 & 47.17 & 40.03 & 7.14 \\
				\hline
				WSSE & $WSSE_1$ & $WSSE_{1,m}$ & $WSSE_{1,d}$ & $WSSE_2$ & $WSSE_{2,m}$ & $WSSE_{2,d}$ \\
				& 504.57 & 305.48 & 199.09 & 321.05 & 211.93 & 109.12 \\
				\hline
				TSSE & $TSSE_1$ & $TSSE_{1,m}$ & $TSSE_{1,d}$ & $TSSE_2$ & $TSSE_{2,m}$ & $TSSE_{2,d}$ \\
				& 1049.34 & 584.93 & 464.41 & 910.16 & 601.22 & 308.94 \\
				\hline
				$QPI$ & $QPI_1$ & $QPI_{1,m}$ & $QPI_{1,d}$ & $QPI_2$ & $QPI_{2,m}$ & $QPI_{2,d}$ \\
				& 0.5191 & 0.4777 & 0.5713 & 0.6473 & 0.6475 & 0.6468 \\
				\hline
			\end{tabular}
			\caption{Scenario 2: Dispersion (SSE) of clusters and of the datasets. $WSSE$ is the within cluster sum of squares, $TSSE$ is the total sum of squares, $QPI=1-WSSE/TSSE$.}\label{VAR_SC_2}
		\end{table}

		Tab. \ref{SC2_resu} shows the external validity indexes for each algorithm.
		The best performance is observed for algorithm $\Pi-f$ (the values in bold), namely, the algorithm where the relevance weights are computed for each cluster and for each component, confirming the ability of the algorithm in obtaining a good classification when the clusters have a different structure w.r.t. the internal variability.
		\begin{table}[hbtp]
			\centering
			\caption{Scenario 2: external validity indexes, $m=1.5$}\label{SC2_resu}
			\begin{tabular}{|l|rrrr|rrrr|}
				\hline
				& \multicolumn{4}{|c|}{\textbf{Fuzzy partition}}&\multicolumn{4}{|c|}{\textbf{Crisp partition}}\\
				\textbf{Method} & \textbf{ARI} & \textbf{Jacc} & \textbf{FM} & \textbf{Hub} & \textbf{ARI} & \textbf{Jacc} & \textbf{FM} & \textbf{Hub} \\
				\hline
				\multicolumn{9}{|c|}{FCM}\\
				\hline
				FCM &0.7864 & 0.5167 & 0.6814 & 0.5208 & 0.8361 & 0.6079 & 0.7562 & 0.6330\\
				\hline
				\multicolumn{9}{|c|}{AFCM}\\
				\hline
				$\Pi-a$ & 0.7987 & 0.5387 & 0.7003 & 0.5489 & 0.8587 & 0.6539 & 0.7909 & 0.6844   \\
				$\Pi-b$ & 0.7978 & 0.5371 & 0.6989 & 0.5468 & 0.8587 & 0.6539 & 0.7909 & 0.6844 \\
				$\Pi-c$ & 0.7979 & 0.5371 & 0.6989 & 0.5470 & 0.8519 & 0.6412 & 0.7816 & 0.6699 \\
				$\Pi-d$ & 0.7982 & 0.5374 & 0.6991 & 0.5474 & 0.8622 & 0.6605 & 0.7956 & 0.6919 \\
				$\Pi-e$ & 0.8109 & 0.5604 & 0.7183 & 0.5761 & 0.8622 & 0.6605 & 0.7956 & 0.6919\\
				$\Pi-f$ & \textbf{0.8149} & \textbf{0.5668} &\textbf{ 0.7235} & \textbf{0.5845} & \textbf{0.8695} & \textbf{0.6752} & \textbf{0.8062} & \textbf{0.7080} \\
				\hline
			\end{tabular}
		\end{table}
		
		In Tab. \ref{SC2_wei} are reported the relevance weights obtained from the AFCM $\Pi-f$ type algorithm. We can observe that, as expected the components have different weights, and the weights related to the dispersion component are generally grater than the weights of the position one for the first variable. We remark that the dispersion component is related to the interaction between the $\eta$ and the $\delta$ parameters as described in Eq. \ref{GILCH_mom}, thus it does not follows exactly the dispersion of the $\eta$ component only.
		\begin{table}[htbp]
			\centering
			\caption{Scenario 2: Relevance weights for algorithm $\Pi-f$}\label{SC2_wei}
			\begin{tabular}{|c|cc|cc|}
				\hline
				&\multicolumn{2}{|c|}{\textbf{Var.1}}&\multicolumn{2}{|c|}{\textbf{Var. 2}}\\
				\textbf{Cluster} &$\lambda_{i1,M}$ & $\lambda_{i1,V}$ & $\lambda_{i2,M}$ & $\lambda_{i2,V}$ \\
				\hline
				1 & 0.6737 & 1.0776 & 0.9145 & 1.5062 \\
				2 & 0.5749 & 0.6905 & 1.1397 & 2.2104 \\
				3 & 0.6739 & 1.0083 & 1.2043 & 1.2219 \\
				\hline
			\end{tabular}
		\end{table}
		
		\subsubsection{Scenario 3}
		For scenario 3, we set up three clusters having different within dispersion only for the variability component for each cluster, while, for the position component, a weak cluster structure holds. In this scenario, we aim at studying what happens if a distributional variable dispersion is heavily determined by just one component (the position one, in this case). In this case, because of the different constraints, $\Pi_a$, $\Pi_b$, $\Pi_c$ and $\Pi_d$ AFCM algorithms should fail in identifying a cluster structure.
		In this case, the distributional data are generated according to Gaussian distributions of the parameter as listed in Tab. \ref{SC3_tab}. The bivariate plots for each parameter are shown in Fig. \ref{SC3_fig}.
		\begin{table}[htbp]
			\centering
			\begin{tabular}{|c|ccc|ccc|}
				\hline
				& \multicolumn{3}{|c|}{\bf Var. 1} & \multicolumn{3}{|c|}{\bf Var. 2}  \\
				Clusters & $\gamma$ &$\eta$ &$\delta$ &$\gamma$ &$\eta$ &$\delta$  \\
				\hline
				1 &$\mathcal{N}(-2,30)$&$\mathcal{N}(1.5,0.03)$&$\mathcal{N}(0.1,0.01)$  &
				$\mathcal{N}(-2,30)$&$\mathcal{N}(1.5,0.015)$&$\mathcal{N}(0.2,0.02)$  \\
				2 &$\mathcal{N}(0,30)$&$\mathcal{N}(2,0.01)$&$\mathcal{N}(-0.05,0.02)$  &
				$\mathcal{N}(0,30)$&$\mathcal{N}(2,0.05)$&$\mathcal{N}(0.05,0.01)$  \\
				3 &$\mathcal{N}(2,30)$&$\mathcal{N}(2,0.02)$&$\mathcal{N}(-0.2,0.005)$  &
				$\mathcal{N}(2,30)$&$\mathcal{N}(1,0.01)$&$\mathcal{N}(-0.1,0.002)$ \\
				\hline
			\end{tabular}
			\caption{Scenario 3: Sample distributions of parameters}\label{SC3_tab}
		\end{table}
		
		\begin{figure}[htbp]
			\centering
			\includegraphics[width=0.45\textwidth]{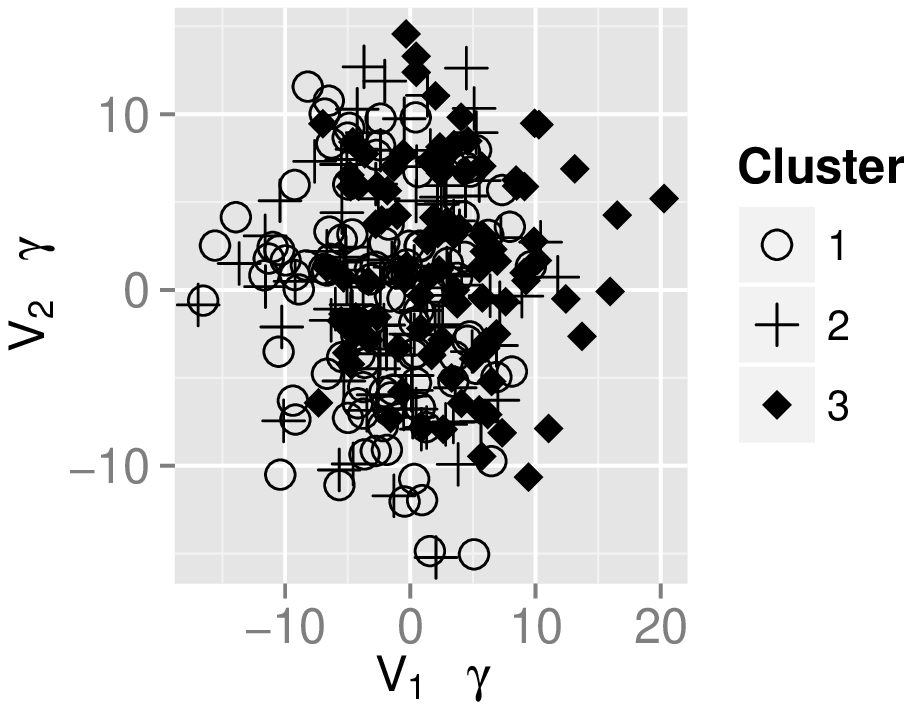}
			\includegraphics[width=0.45\textwidth]{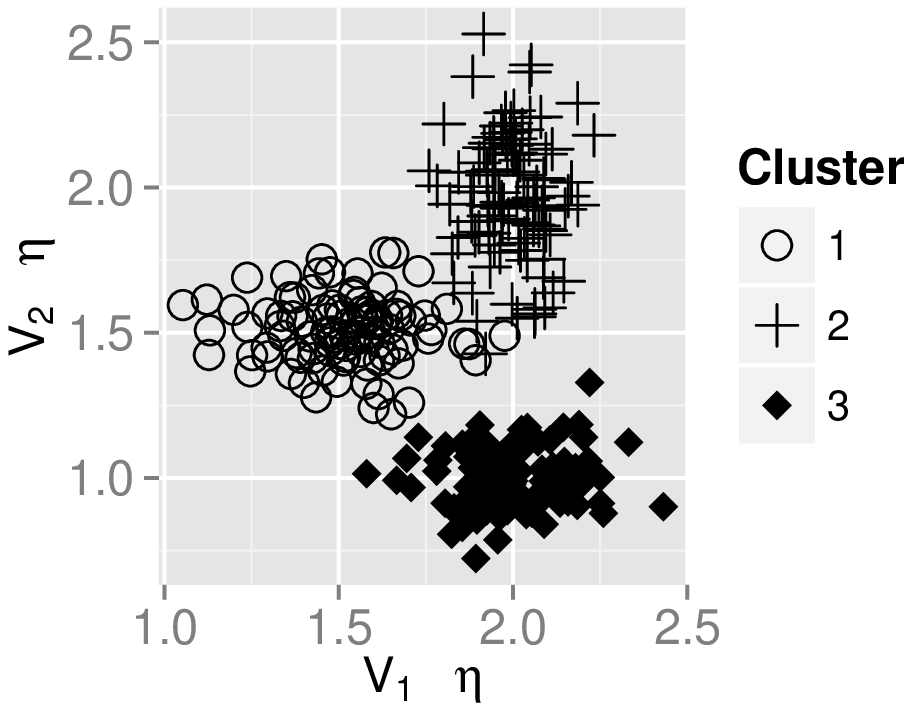}
			\includegraphics[width=0.45\textwidth]{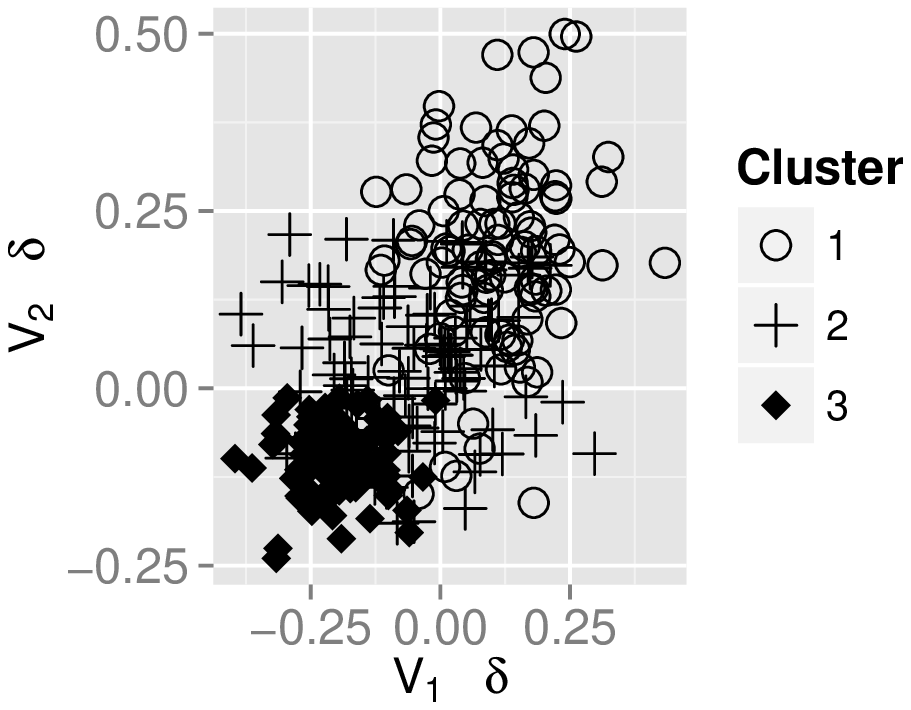}
			\caption{Scenario 3: parameters}\label{SC3_fig}
		\end{figure}
		In Table \ref{VAR_SC_3} are reported the within cluster dispersion for each variable and each component, and the total dispersion of the dataset. The last row reports the ratios between the between-cluster and the total dispersion (namely, the $QPI$ of the apriori clustered data), where the more the values are close to 1, the more there is a cluster structure. The parameters choice induces three clusters having a different within internal dispersion and a cluster structure.
		\begin{table}[htbp]
			\centering
			\begin{tabular}{|l|rrr|rrr|}
				\hline
				& \multicolumn{3}{|c|}{\bf Var.1} &\multicolumn{3}{|c|}{\bf Var.2}\\
				Clusters & $SSE_1$ & $SSE_{1,m}$ & $SSE_{1,d}$ & $SSE_2$ & $SSE_{2,m}$ & $SSE_{2,d}$ \\
				\hline
				Cl. 1 & 3031.52 & 3028.87 & 2.65 & 3712.05 & 3708.67 & 3.38 \\
				Cl. 2 &2721.72 & 2715.82 & 5.90 & 3467.91 & 3463.94 & 3.97 \\
				Cl. 3 & 3052.63 & 3050.55 & 2.09 & 3026.37 & 3025.94 & 0.44 \\
				\hline
				WSSE & $WSSE_1$ & $WSSE_{1,m}$ & $WSSE_{1,d}$ & $WSSE_2$ & $WSSE_{2,m}$ & $WSSE_{2,d}$ \\
				& 8805.87 & 8795.24 & 10.63 & 10206.33 & 10198.54 & 7.79 \\
				\hline
				TSSE & $TSSE_1$ & $TSSE_{1,m}$ & $TSSE_{1,d}$ & $TSSE_2$ & $TSSE_{2,m}$ & $TSSE_{2,d}$ \\
				& 9929.54 & 9903.03 & 26.51 & 10401.25 & 10375.66 & 25.59 \\
				\hline
				$QPI$ & $QPI_1$ & $QPI_{1,m}$ & $QPI_{1,d}$ & $QPI_2$ & $QPI_{2,m}$ & $QPI_{2,d}$ \\
				& 0.1132 & 0.1119 & 0.5989 & 0.0187 & 0.0171 & 0.6955 \\
				\hline
			\end{tabular}
			\caption{Scenario 3: Dispersion (SSE) of clusters and of the datasets. $WSSE$ is the within cluster sum of squares, $TSSE$ is the total sum of squares, $QPI=1-WSSE/TSSE$.}\label{VAR_SC_3}
		\end{table}

		Tab. \ref{SC3_resu} shows the external validity indexes for each algorithm. It is worth noting that while $FCM$ and AFCM $\Pi_a$, $\Pi_b$, $\Pi_c$ and $\Pi_d$ algorithms have similar performances, $\Pi_e$ and $\Pi_f$ types recognize better the classification structure. This is due to the constraints on relevance weights used in the algorithms. For example, in the $\Pi_d$ type algorithms, where the relevance weights are computed for the components of each variable for each cluster, the product-to-one constraint is considered separately for the position and the dispersion component. Thus, even if a cluster structure does not exist for a component, the algorithm is forced to assign a relevance weight greater than zero. On the other hand, in $\Pi_e$ and $\Pi_f$ AFCM algorithms this situation does not occur, and the relevance weights of those components for which a cluster structure cannot be observed go towards zero.
		
		Among the $\Pi_e$ and $\Pi_f$ types algorithms, the best performances is observed for algorithm $\Pi_-f$ (the values in bold), namely, the algorithm where the relevance weights are computed for each cluster and for each component.
		
		\begin{table}[hbtp]
			\centering
			\caption{Scenario 3: external validity indexes, $m=1.5$}\label{SC3_resu}
			\begin{tabular}{|l|rrrr|rrrr|}
				\hline
				& \multicolumn{4}{|c|}{\textbf{Fuzzy partition}}&\multicolumn{4}{|c|}{\textbf{Crisp partition}}\\
				\textbf{Method} & \textbf{ARI} & \textbf{Jacc} & \textbf{FM} & \textbf{Hub} & \textbf{ARI} & \textbf{Jacc} & \textbf{FM} & \textbf{Hub} \\
				\hline
				FCM & 0.5660 & 0.2089 & 0.3456 & 0.0210 & 0.5696 & 0.2126 & 0.3506 & 0.0287 \\
				\hline
				$\Pi-a$ & 0.5531 & 0.2042 & 0.3392 & 0.0018 & 0.5511 & 0.2054 & 0.3409 & 0.0009 \\
				$\Pi-b$ & 0.5531 & 0.2042 & 0.3392 & 0.0018 & 0.5511 & 0.2054 & 0.3409 & 0.0009 \\
				$\Pi-c$ & 0.5561 & 0.2010 & 0.3348 & 0.0017 & 0.5559 & 0.2021 & 0.3363 & 0.0027 \\
				$\Pi-d$ & 0.5561 & 0.2010 & 0.3347 & 0.0017 & 0.5559 & 0.2021 & 0.3363 & 0.0027 \\
				$\Pi-e$ &  0.7871 & 0.5150 & 0.6799 & 0.5204 & 0.9162 & 0.7761 & 0.8740 & 0.8112 \\
				$\Pi-f$ &  \textbf{0.8132} & \textbf{0.5606} & \textbf{0.7184} & \textbf{0.5786} & \textbf{0.9197} & \textbf{0.7844} & \textbf{0.8792} & \textbf{0.8191} \\
				\hline
			\end{tabular}
		\end{table}

		In Tab. \ref{SC3_wei} are reported the relevance weights for algorithm $\Pi-f$.
		We observe that, as expected, the components have different weights, and the weights related to the position component ($\lambda_{ij,M}$), are close to zero. In this case, it is more evident that the proposed algorithms are able to perform an automatic feature selection within the clustering process.
		
		\begin{table}[ht]
			\centering
			\caption{Scenario 3:  Relevance weights for algorithm $\Pi-f$}\label{SC3_wei}
			\begin{tabular}{|c|cc|cc|}
				\hline
				&\multicolumn{2}{|c|}{\textbf{Var.1}}&\multicolumn{2}{|c|}{\textbf{Var. 2}}\\
				\textbf{Cluster} &$\lambda_{i1,M}$ & $\lambda_{i1,V}$ & $\lambda_{i2,M}$ & $\lambda_{i2,V}$ \\
				\hline
				1 & 0.0473 & 21.0950 & 0.0388 & 25.8161 \\
				2 & 0.0193 & 27.2639 & 0.0198 & 96.1187 \\
				3 & 0.0397 & 28.8574 & 0.0298 & 29.2874 \\
				\hline
			\end{tabular}
		\end{table}

		\subsection{Real world data: age-sex pyramids of World Countries in 2014}
		For testing the proposed algorithm on a real-world dataset, we considered population age-sex pyramids data collected by the Census Bureau of USA in 2014 on 228 countries in the World. The dataset \texttt{Age\_Pyramids\_2014} is freely available in the \texttt{HistDAWass} package developed in R\footnote{\texttt{https://cran.r-project.org/package=HistDAWass}}. A population pyramid is a common way to  represent the distribution of sex and age of people living in a given administrative unit (for instance, in a town, region or country) jointly. Each country is represented by two histograms describing the age distributions for the male and the female population respectively. Both distributions are vertically opposed, and the representation is similar to a pyramid. The shape of a pyramid varies according to the distribution of the age in the population that is considered as a consequence of the development of a country.
		In Fig. \ref{Fig_pyr_of_W} is shown the age  pyramid of the World in 2014.
		\begin{figure}[htbp]
			\centering
			\includegraphics[width=2.5in]{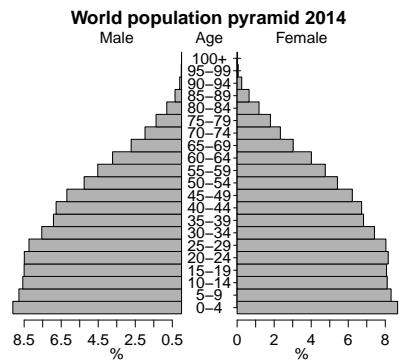}
			\caption{World population pyramid in 2014}\label{Fig_pyr_of_W}
		\end{figure}
		In the demographic literature, there is a good consensus in considering three main stages in the demographic evolution of a population of a country that can be represented by three main kinds of pyramids: constrictive, expansive and stationary. In Fig. \ref{Fig_types_pyr}, we reported the three prototypical pyramid structures \cite[Ch. 5]{Atlas}.
		\begin{figure}[htbp]
			\centering
			\includegraphics[width=0.95\textwidth]{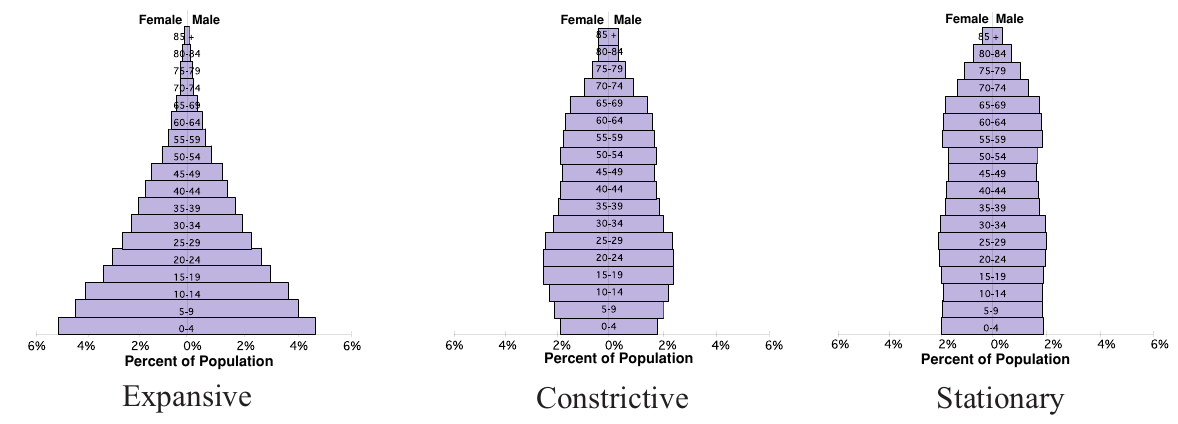}
			\caption{Types of pyramids}\label{Fig_types_pyr}
		\end{figure}
		This would suggest that a suitable choice for a correct number of cluster is $c=3$. For confirming this choice, we computed the internal validity indexes discussed above, for the standard fuzzy c-means of distributional data obtaining the results in Table \ref{Tab_crossvalif_FCM}. After fixing $m=1.5$ as fuzzifier parameter, $\varepsilon=10^{-5}$, we used the $I_{PC}$, $I_{PE}$,  $I_{MPC}$, $I_{XB}$  and $I_{Sil}$ as validity indexes using a range of number of clusters from $C=2$ to $C=8$.
		\begin{table}[htbp]
			\centering
			\caption{Fuzzy c-means of distributional data. Internal validity indexes.}\label{Tab_crossvalif_FCM}
			\begin{tabular}{crrrrr}
				\hline
				c & $I_{PC}$ & $I_{PE}$ & $I_{MPC}$ & $I_{XB}$ & $I_{Sil}$ \\
				\hline
				2 &\textbf{ 0.9346} &\textbf{ 0.1107} & \textbf{0.8692} &  \textbf{0.0796} & \textbf{0.8284} \\
				3 & 0.9196 & 0.1435 & 0.8793 &  0.1023 & 0.7906 \\
				4 & 0.8868 & 0.2030 & 0.8490 &  0.1348 & 0.7359 \\
				5 & 0.8754 & 0.2312 & 0.8442 &  0.1931 & 0.6957 \\
				6 & 0.8644 & 0.2591 & 0.8373 &  0.1977 & 0.6557 \\
				7 & 0.8360 & 0.3143 & 0.8086 &  0.2604 & 0.6054 \\
				8 & 0.8345 & 0.3198 & 0.8109 &  0.2608 & 0.6406 \\
				\hline
			\end{tabular}
		\end{table}

		Considering that, $I_{PC}$, and  $I_{PE}$ suffer of some monotonic effects w.r.t. the number of clusters, we observe that $I_{XB}$ and $I_{Sil}$ both agree on a suitable choice for $c=2$. In fact, observing the three models in Fig. \ref{Fig_types_pyr}, the \textit{Constrictive} and the \textit{Stationary} looks very similar.
		Once fixed $c=2$, and $m=1.5$ for the base algorithm, we adopt the same values for those based on adaptive distances.
		In Tables \ref{Pyra_res1} we reported the validity indexes computed for the FCM and the AFCM algorithms. We introduce , here the $QPI$ index, a separation index, for comparing the obtained fuzzy partitions.
		\begin{table}[htbp]
			\centering
			\caption{Pyramid dataset:  the FCM and AFCM algorithms compared according to internal validity indexes. $c=2$, $m=1.5$.}\label{Pyra_res1}
			\begin{tabular}{rrrrrrr}
				\hline
				Method & $I_{PC}$ & $I_{PE}$ & $I_{MPC}$ & $I_{XB}$ & $I_{Sil}$ & $QPI$ \\
				\hline
				FCM & \textbf{0.9346} & \textbf{0.1107} & 0.8692 & 0.0796 & \textbf{0.8284} & \textbf{0.7550} \\
				\hline
				$\Pi-a$ & 0.9344 & 0.1108 & 0.8689 & 0.0794 & 0.8280 & 0.7544 \\
				$\Pi-b$ & 0.9345 & 0.1108 & 0.8689 & 0.0794 & 0.8280 & 0.7545 \\
				$\Pi-c$ & 0.9345 & 0.1108 & 0.8690 & 0.0793 & \textbf{0.8284} & 0.7544 \\
				$\Pi-d$ & 0.9345 & 0.1108 & 0.8690 & 0.0793 & \textbf{0.8284} & 0.7544 \\
				$\Pi-e$ & 0.9308 & 0.1185 & 0.8616 & 0.0650 & 0.8099 & 0.7404 \\
				$\Pi-f$ & 0.9311 & 0.1182 & 0.8622 & \textbf{0.0648} & 0.8101 & 0.7401 \\
				\hline
			\end{tabular}
		\end{table}
		In Tab. \ref{Pyra_res1}, we observe that all the validity indexes show similar values except for the $I_{XB}$. From the results we remark that in general the $\Pi_f$ AFCM shows better performances accordingly to this index. We recall that $I_{XB}$, is a validity index taking into consideration both the compactness and the separateness of a fuzzy partition, while the other measures are mainly related or to the compactness or to the separateness aspect of the obtained partition.
		
		In the following, we show the detailed results for the algorithm $\Pi-f$ since it results to obtain the best scores for $I_{XB}$ index.
		\begin{description}
			\item[Prototypes]
			Looking at the two centers of the fuzzy clusters in Fig. \ref{fig_PYR_proto}, we can observe that the first center represent more the \textit{expanding} model of population, while the second center has a shape similar to the \textit{stationary} population. So the method is able to catch the two extreme situations theorized by the demographic literature.
			\begin{figure}[htbp]
				\includegraphics[width=\textwidth]{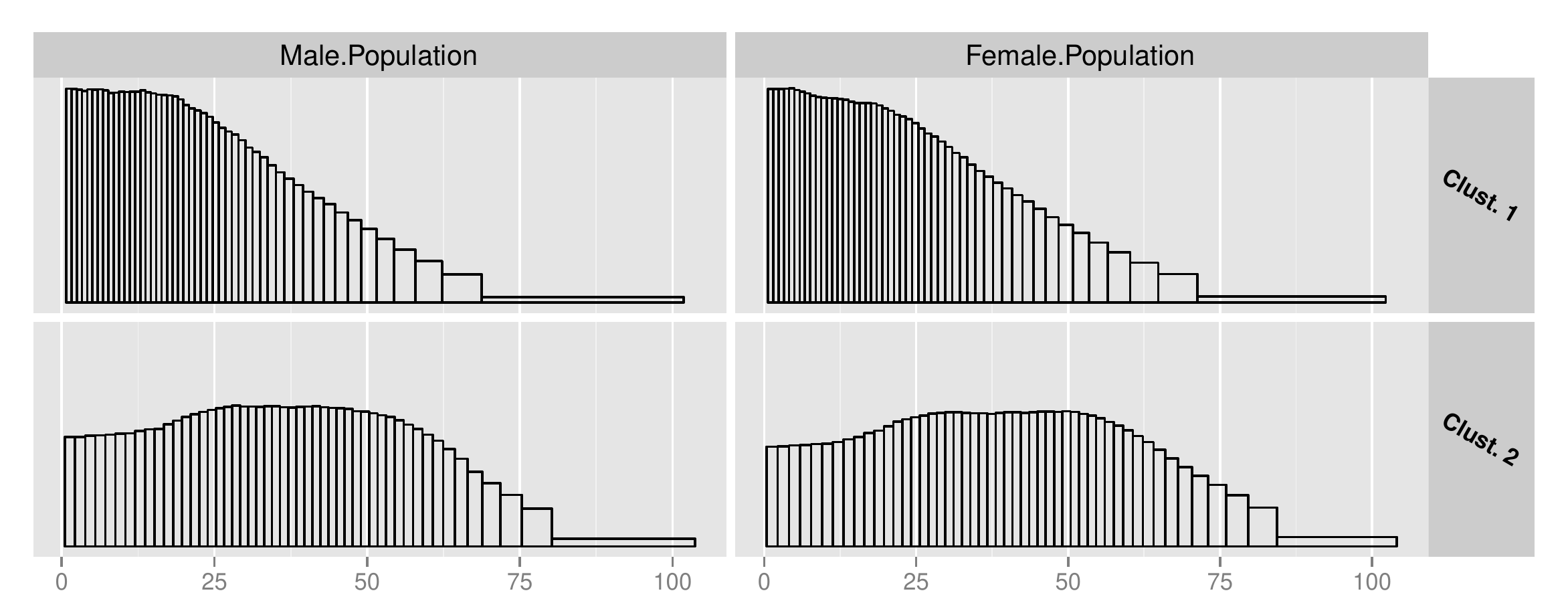}
				\caption{Class prototypes.}\label{fig_PYR_proto}
			\end{figure}
			\item[Weights]
			Considering the results of $\Pi-f$ algorithm in Tab. \ref{Tab_rel_weight}, we remark that the relevance weights of the components of the two variables is higher for the variability component of the distributional variables, while it is lower for the position component. This suggests that the cluster structure is more related to the similarity in the variability (namely, the size and the shape) of the distributions than the variability of the positions, with a little more importance related to the variability for the \textit{Male.Population} distributions. Please, note that in this case the pyramid related to each center is represented by two histograms for the male and the female population.
			\begin{table}[htbp]
				\centering
				\caption{Age-pyramid dataset, relevance weights of $\Pi-f$ algorithm}\label{Tab_rel_weight}
				\begin{tabular}{|l|cc|cc|}
					\hline
					& \multicolumn{2}{|c|}{Male population}&\multicolumn{2}{|c|}{Female population}\\
					Cluster &$\lambda_{1,M}$ &$\lambda_{1,V}$ &$\lambda_{2,M}$ &$\lambda_{2,V}$  \\
					\hline
					1 & 0.5335 & 1.7761 & 0.5250 & 2.0102 \\
					2 & 0.5709 & 2.1635 & 0.4251 & 1.9047 \\
					\hline
				\end{tabular}
			\end{table}
			
			\item[Clusters members]
			In Tab. \ref{Members} are reported the first 15 countries with the highest membership degree to each cluster, and the 15 most confused countries, namely, those countries with a low membership squared average to the two clusters. We expect that these countries belong to the \textit{constrictive} phase of the evolution of a population, especially for those countries with a squared average membership close to 0.5, like \textit{Azerbaijan} and \textit{Brazil}.
			
			\begin{table}[htbp]
				\centering
				\caption{The first 15 countries with highest memberships for each cluster, and the 15 countries with an average lower memberships.}\label{Members}
				\begin{tabular}{|lr|lr|lrr|}
					\hline
					\textbf{Countries} & \textbf{Cl.1}    & \textbf{Countries} & \textbf{Cl.2}    & \textbf{Countries} & \textbf{Cl.1}    & \textbf{Cl.2} \\
					\hline
					Haiti 			& 0.9999 & Slovakia 	& 0.9999 & Azerbaijan & 0.5081 & 0.4919 \\
					Syria 			& 0.9999 & United States & 0.9998 & Brazil & 0.5223 & 0.4777 \\
					Honduras 		& 0.9999 & Luxembourg 	& 0.9997 & Antigua and Bar. & 0.4246 & 0.5754 \\
					Laos  			& 0.9999 & Poland 		& 0.9997 & French Polynesia & 0.5796 & 0.4204 \\
					Pakistan 		& 0.9999 & Australia 	& 0.9996 & Montserrat & 0.6027 & 0.3973 \\
					Belize 			& 0.9999 & Cuba  		& 0.9996 & Bahamas, The & 0.6242 & 0.3758 \\
					West Bank 		& 0.9999 & Romania 		& 0.9996 & Costa Rica & 0.6307 & 0.3693 \\
					Philippines 	& 0.9999 & Saint Helena & 0.9996 & Kazakhstan & 0.6340 & 0.3660 \\
					Nepal 			& 0.9999 & New Zealand 	& 0.9996 & Panama & 0.6427 & 0.3573 \\
					Solomon Islands & 0.9999 & Puerto Rico 	& 0.9995 & New Caledonia & 0.3358 & 0.6642 \\
					Papua New Guinea& 0.9999 & Norway 		& 0.9995 & Saint Martin & 0.6728 & 0.3272 \\
					Western Sahara 	& 0.9999 & Liechtenstein & 0.9995 & Guam  & 0.2987 & 0.7013 \\
					Kiribati 		& 0.9999 & Iceland 		& 0.9994 & Sint Maarten & 0.2971 & 0.7029 \\
					Ghana 			& 0.9999 & Macedonia 	& 0.9994 & Tunisia & 0.2724 & 0.7276 \\
					Namibia 		& 0.9998 & Taiwan 		& 0.9994 & Palau & 0.2696 & 0.7304 \\
					\hline
				\end{tabular}%
				\label{tab:addlabel}%
			\end{table}%
		\end{description}
		\section{Conclusions}\label{SEC_conl}
		The paper presented an extension of fuzzy c-means algorithms to data described by distributional variables. The fuzzy c-means algorithm has been integrated in order to compute the relevance of each distributional variable, or of its components, in order to take into consideration also non-spherical clusters. We presented an automatic weighting systems which is related to the determinant of the within covariance matrix, leading to a set of six product-to-one constraints for the relevance weights. Generally, the proposed algorithms are able to identify clusters with different within variability structure. In particular, the algorithms of type $\Pi-e$ and $\Pi-f$ are able also to discover cluster structures also when this occur for not all the components of the distributional variables. The applications on synthetic and real world data confirm the hypothesis that algorithms based on adaptive distances are useful to discover non-spherical clusters and to perform a variable and/or a component selection.
		\section*{Acknowledgments}
		
		The authors are grateful to the anonymous referees for their careful revision, valuable suggestions, and comments which improved this paper.
		The Brazilian author would like to thank  FACEPE (Research Agency from the State of Pernambuco, Brazil) and CNPq (National Council for Scientific and Technological Development, Brazil) for their financial support.
		\section*{References}
		
		\bibliography{mybibfile}

\begin{thebibliography}{10}
\expandafter\ifx\csname url\endcsname\relax
  \def\url#1{\texttt{#1}}\fi
\expandafter\ifx\csname urlprefix\endcsname\relax\def\urlprefix{URL }\fi
\expandafter\ifx\csname href\endcsname\relax
  \def\href#1#2{#2} \def\path#1{#1}\fi

\bibitem{BoDid00}
H.~H. Bock, E.~Diday, Analysis of Symbolic Data. Exploratory Methods for
  Extracting Statistical Information from Complex Data, Springer, Berlin, 2000.

\bibitem{Jain2010}
A.~K. Jain, Data clustering: 50 years beyond k-means, Pattern Recognition
  Letters 31 (2010) 651--666.

\bibitem{Xu05}
R.~Xu, D.~I.~I. Wunusch, Survey of clustering algorithms, IEEE Trans. Neural
  Networks 16~(3) (2005) 645--678.

\bibitem{Bezdek81}
J.~C. Bezdek, Pattern recognition with fuzzy objective function algorithms,
  Advanced applications in pattern recognition, Plenum Press, New York, 1981.

\bibitem{kalfman20015}
L.~Kaufman, P.~Rousseeuw, Finding Groups in Data: An Introduction to Cluster
  Analysis, Wiley, Hoboken, NJ, 2005.

\bibitem{Rush01}
L.~R\"{u}shendorff, Wasserstein metric, in: Encyclopedia of Mathematics,
  Springer, 2001.

\bibitem{IrpVer2015}
A.~Irpino, R.~Verde, Basic statistics for distributional symbolic variables: a
  new metric-based approach, Advances in Data Analysis and Classification 9~(2)
  (2015) 143--175.

\bibitem{IrVer06}
A.~Irpino, R.~Verde, A new {W}asserstein based distance for the hierarchical
  clustering of histogram symbolic data, in: V.~Batanjeli, H.~Bock,
  A.~Ferligoj, A.~Ziberna (Eds.), Data Science and Classification, Springer,
  Berlin, 2006, pp. 185--192.

\bibitem{IRVERLEC06}
R.~Verde, A.~Irpino, Dynamic clustering of histograms using {W}asserstein
  metric, in: A.~Rizzi, M.~Vichi (Eds.), Proceedings in Computational
  Statistics, COMPSTAT 2006, Compstat 2006, Physica Verlag, Heidelberg, 2006,
  pp. 869--876.

\bibitem{VeIR08}
R.~Verde, A.~Irpino, Comparing histogram data using a
  {M}ahalanobis-{W}asserstein distance, in: P.~Brito (Ed.), Proceedings in
  Computational Statistics, COMPSTAT 2008, Compstat 2008, Springer Verlag,
  Heidelberg, 2008, pp. 77--89.

\bibitem{VerIRP08did}
R.~Verde, A.~Irpino, Dynamic clustering of histogram data: using the right
  metric, in: P.~Brito, P.~Bertrand, G.~Cucumel, F.~De~Carvalho (Eds.),
  Selected contributions in data analysis and classification, Springer, Berlin,
  2008, pp. 123--134.

\bibitem{DiSIM76}
E.~Diday, J.~C. Simon, Clustering analysis, in: K.~Fu (Ed.), Digital Pattern
  Classification, Springer, Berlin, 1976, pp. 47--94.

\bibitem{Terada10}
Y.~Terada, H.~Yadohisa, Non-hierarchical clustering for distribution-valued
  data, in: Y.~Lechevallier, G.~Saporta (Eds.), Proceedings of COMPSTAT 2010,
  Springer, Berlin, 2010, pp. 1653--1660.

\bibitem{VracEtAL12}
M.~Vrac, L.~Billard, E.~Diday, A.~Chedin, Copula analysis of mixture models,
  Computational Statistics 27 (2012) 427--457.

\bibitem{DiGov77}
E.~Diday, G.~Govaert, Classification automatique avec distances adaptatives,
  R.A.I.R.O. Informatique Computer Science 11~(4) (1997) 329--349.

\bibitem{Modha2003}
D.~S. Modha, W.~S. Spangler, Feature weighting in k-means clustering, Machine
  Learning 52~(3) (2003) 217--237.

\bibitem{DECAYVES09}
F.~A.~T. De~Carvalho, Y.~Lechevallier, Partitional clustering algorithms for
  symbolic interval data based on single adaptive distances, Pattern
  Recognition 42~(7) (2009) 1223--1236.

\bibitem{DECLEC09}
F.~A.~T. De~Carvalho, Y.~Lechevallier, Dynamic clustering of interval-valued
  data based on adaptive quadratic distances, Trans. Sys. Man Cyber. Part A
  39~(6) (2009) 1295--1306.

\bibitem{DeCDeS10}
F.~A.~T. De~Carvalho, R.~M.~C.~R. De~Souza, Unsupervised pattern recognition
  models for mixed feature--type symbolic data, Pattern Recognition Letters 31
  (2010) 430--443.

\bibitem{IrpinoESWA}
A.~Irpino, R.~Verde, F.~de~A.T. De~Carvalho, Dynamic clustering of histogram
  data based on adaptive squared wasserstein distances, Expert Systems with
  Applications 41~(7) (2014) 3351 -- 3366.

\bibitem{Huang_05}
J.~Huang, M.~Ng, H.~Rong, Z.~Li, Automated variable weighting in k-means type
  clustering, Pattern Analysis and Machine Intelligence, IEEE Transactions on
  27~(5) (2005) 657--668.

\bibitem{IrpinoR07}
A.~Irpino, E.~Romano, Optimal histogram representation of large data sets:
  {F}isher vs piecewise linear approximation, Revue des Nouvelles Technologies
  de l'Information RNTI-E-9 (2007) 99--110.

\bibitem{GiSu02}
A.~L. Gibbs, F.~E. Su, On choosing and bounding probability metrics,
  International Statistical Review 70~(3) (2002) 419--435.

\bibitem{givens1984}
C.~R. Givens, R.~M. Shortt, A class of wasserstein metrics for probability
  distributions., Michigan Math. J. 31~(2) (1984) 231--240.

\bibitem{diday77}
E.~Diday, G.~Govaert, Classification automatique avec distances adaptatives,
  RAIRO Informatique/Computer Science 11~(4) (1977) 329--349.

\bibitem{friedman04}
J.~H. Friedman, J.~J. Meulman, Clustering objects on subsets of attributes,
  Journal of the Royal Statistical Society, Serie B 66 (2004) 815--849.

\bibitem{frigui04}
H.~Frigui, O.~Nasraoui, Unsupervised learning of prototypes and attribute
  weights, Pattern Recognition 37~(3) (2004) 567--581.

\bibitem{Lacl_2015}
C.~Laclau, F.~De~A.T.De~Carvalho, M.~Nadif, Fuzzy co-clustering with automated
  variable weighting, in: Fuzzy Systems (FUZZ-IEEE), 2015 IEEE International
  Conference on, 2015, pp. 1--8.

\bibitem{Fuzzy_init}
D.-W. Kim, K.~H. Lee, D.~Lee, A novel initialization scheme for the fuzzy
  c-means algorithm for color clustering, Pattern Recognition Letters 25~(2)
  (2004) 227 -- 237.

\bibitem{Meila2007}
M.~Meila, Comparing clusterings—an information based distance, Journal of
  Multivariate Analysis 98~(5) (2007) 873 -- 895.

\bibitem{Huller_12}
E.~Hüllermeier, M.~Rifqi, S.~Henzgen, R.~Senge, Comparing fuzzy partitions: A
  generalization of the rand index and related measures., IEEE T. Fuzzy Systems
  20~(3) (2012) 546--556.

\bibitem{Frigui2007}
H.~Frigui, C.~Hwang, F.~C.-H. Rhee, Clustering and aggregation of relational
  data with applications to image database categorization, Pattern Recognition
  40~(11) (2007) 3053 -- 3068.

\bibitem{BezdekAL1999}
J.~Bezdek, J.~Keller, R.~Krisnapuram, N.~Pal, Fuzzy Models and Algorithms for
  Pattern Recognition and Image Processing, Vol.~4 of The Handbooks of Fuzzy
  Sets Series, Springer US, 1999.

\bibitem{Dave96}
R.~N. Dave, Validating fuzzy partitions obtained through c-shells clustering,
  Pattern Recogn. Lett. 17~(6) (1996) 613--623.

\bibitem{XieBeni}
X.~L. Xie, G.~Beni, A validity measure for fuzzy clustering, Pattern Analysis
  and Machine Intelligence, IEEE Transactions on 13~(8) (1991) 841--847.

\bibitem{Silh_Campello06}
R.~J. G.~B. Campello, E.~R. Hruschka, A fuzzy extension of the silhouette width
  criterion for cluster analysis, Fuzzy Sets and Systems 157~(21) (2006)
  2858--2875.

\bibitem{Carvalho2006}
F.~de~A.T.~de Carvalho, C.~P. Tenório, N.~L.~C. Junior, Partitional fuzzy
  clustering methods based on adaptive quadratic distances, Fuzzy Sets and
  Systems 157~(21) (2006) 2833 -- 2857.

\bibitem{Gilchrist}
W.~Gilchrist, Statistical Modelling with Quantile Functions, CRC Press,
  Abingdon, 2000.

\bibitem{Atlas}
D.~Hulse, S.~Gregory, J.~Baker,
  \href{http://www.fsl.orst.edu/pnwerc/wrb/Atlas_web_compressed/PDFtoc.html}{Willamette
  River Basin: trajectories of environmental and ecological change}, Oregon
  State University Press, 2002.
\newline\urlprefix\url{http://www.fsl.orst.edu/pnwerc/wrb/Atlas_web_compressed/PDFtoc.html}

\end{thebibliography}
		
	\end{document}